\newtheorem{proposition}{Proposition}[section]
\theoremstyle{break}
\newtheorem{theorem}{Theorem}[section]
\newtheorem{assumption}[theorem]{Assumption}
\newtheorem{lemma}[theorem]{Lemma}
\def\eqref#1{equation~\ref{#1}}
\def\1{\bm{1}}
\def\vb{{\bm{b}}}
\def\ve{{\bm{e}}}
\def\vg{{\bm{g}}}
\def\vh{{\bm{h}}}
\def\vm{{\bm{m}}}
\def\vu{{\bm{u}}}
\def\vx{{\bm{x}}}
\def\vy{{\bm{y}}}
\def\vz{{\bm{z}}}
\def\mA{{\bm{A}}}
\def\mD{{\bm{D}}}
\def\mE{{\bm{E}}}
\def\mH{{\bm{H}}}
\def\mI{{\bm{I}}}
\def\mL{{\bm{L}}}
\def\mW{{\bm{W}}}
\def\mX{{\bm{X}}}
\def\mY{{\bm{Y}}}
\DeclareMathAlphabet{\mathsfit}{\encodingdefault}{\sfdefault}{m}{sl}
\SetMathAlphabet{\mathsfit}{bold}{\encodingdefault}{\sfdefault}{bx}{n}
\def\gG{{\mathcal{G}}}
\newcommand{\R}{\mathbb{R}}
\DeclareMathOperator*{\argmin}{arg\,min}
\title{Graph Neural Networks Gone Hogwild}
\author{Olga Solodova, Nick Richardson, Deniz Oktay, Ryan P. Adams \\
Department of Computer Science \\
Princeton University \\
Princeton, NJ, USA \\
\texttt{\{solodova, njkrichardson, doktay, rpa\}@princeton.edu} \\
}
\begin{document}

\maketitle

\begin{abstract}
  Graph neural networks (GNNs) appear to be powerful tools to learn state representations for agents in distributed, decentralized multi-agent systems, but generate catastrophically incorrect predictions when nodes update asynchronously during inference.
  This failure under asynchrony effectively excludes these architectures from many potential applications where synchrony is difficult or impossible to enforce, e.g., robotic swarms or sensor networks.
  In this work we identify ``implicitly-defined'' GNNs as a class of architectures which is provably robust to asynchronous ``hogwild'' inference, adapting convergence guarantees from work in asynchronous and distributed optimization. 
  We then propose a novel implicitly-defined GNN architecture, which we call an \emph{energy GNN}. 
  We show that this architecture outperforms other GNNs from this class on a variety of synthetic tasks inspired by multi-agent systems.
\end{abstract}

\section{Introduction}

Coordination and control of distributed and decentralized multi-agent systems is a major project spanning engineering and computational science. 
Success in this program has implications across a broad spectrum of applications; autonomous vehicle navigation, energy/resource management and distribution in smart grids, and robotic swarms (for exploration, search and rescue, environmental monitoring, and construction), to name a few. Agents in these systems must take action based on direct observation and communication with the collective. 
At the level of any individual, a controller takes as input the agent's `state representation', which ideally unifies direct measurements and peer-derived messages in a coherent manner. 

In the distributed/decentralized regime, the communication constraints associated with the system can be encoded as a graph in which each node is associated with an agent, and each edge with an agent-to-agent communication link. Adopting the graph view of the system, graph neural networks (in particular, message-passing variants \citep{gilmer2017neural, grlbook}) have been widely studied as methods for deriving  
flexible, data-dependent, and parametric state representations.
GNN-based state representations have been explored in a variety of applications; for example, flock formation, target tracking, path planning, goal assignment, and channel allocation in wireless networks \citep{MAGNN-RL3, MAGNN-RL1, MAGNN-RL5, MAGNN-RL2, MAGNN-RL4, MAGNN-IL2, MAGNN-framework,  learning_gca, MAGNN-IL1,  MAGNN-RL8, MAGNN-RL7,   MAGNN-RL6}.

The motivation to use GNNs can be traced to three core features. 
First, a GNN is a parametric family of functions for deriving a state representation, enabling domain-specificity by choosing from this family and optimizing the state representation for a given task end-to-end.  
Second, message passing within a given GNN layer satisfies the constraint that agents must derive state representations using only information obtained through local communication with neighbors.
Third, GNNs with multiple layers allows state representations to depend on information outside of a given agent's local neighborhood. 

All that said, there is a major conundrum in applying GNNs to these problems.
On the one hand, asynchronous execution and unreliable communication are ubiquitous in real-world decentralized/distributed multi-agent systems. 
On the other hand, these features render conventional GNNs inoperable in the multi-layer regime, the very setting in which GNNs offer a putative advantage through longer range communication/coordination. 
This is because GNN architectures implicitly assume that a synchronization barrier is enforced across all nodes between layers of message passing.
If nodes update asynchronously or if messages can be delayed or lost, embeddings from neighbors do not necessarily correspond with the intended layer in the GNN (from the perspective of the receiving node).
The effective architecture (and therefore output) of the GNN diverges catastrophically from that used in training.
Figure~\ref{fig:async_gnn} illustrates the issue on a toy problem. 
Unfortunately, enforcing synchronization in a distributed and decentralized system is costly (e.g. throughput is limited by the slowest node) and is particularly difficult to enforce in systems where nodes can join or leave the network, or are prone to failure. 
Existing work on using GNNs for computing agent state embeddings either use single layer GNNs (where synchronization is unnecessary), or ignore the constraint of asynchronous execution altogether, limiting the prospect of fielding the method.
If it were possible to perform ``hogwild'' inference in multi-layer GNNs (nodding to the asynchronous optimization work of \citet{hogwild}), GNNs would be a significantly more powerful tool in computing state embeddings for decentralized multi-agent systems. 

\begin{figure*}[t]
\centering
\includegraphics[width=\linewidth]{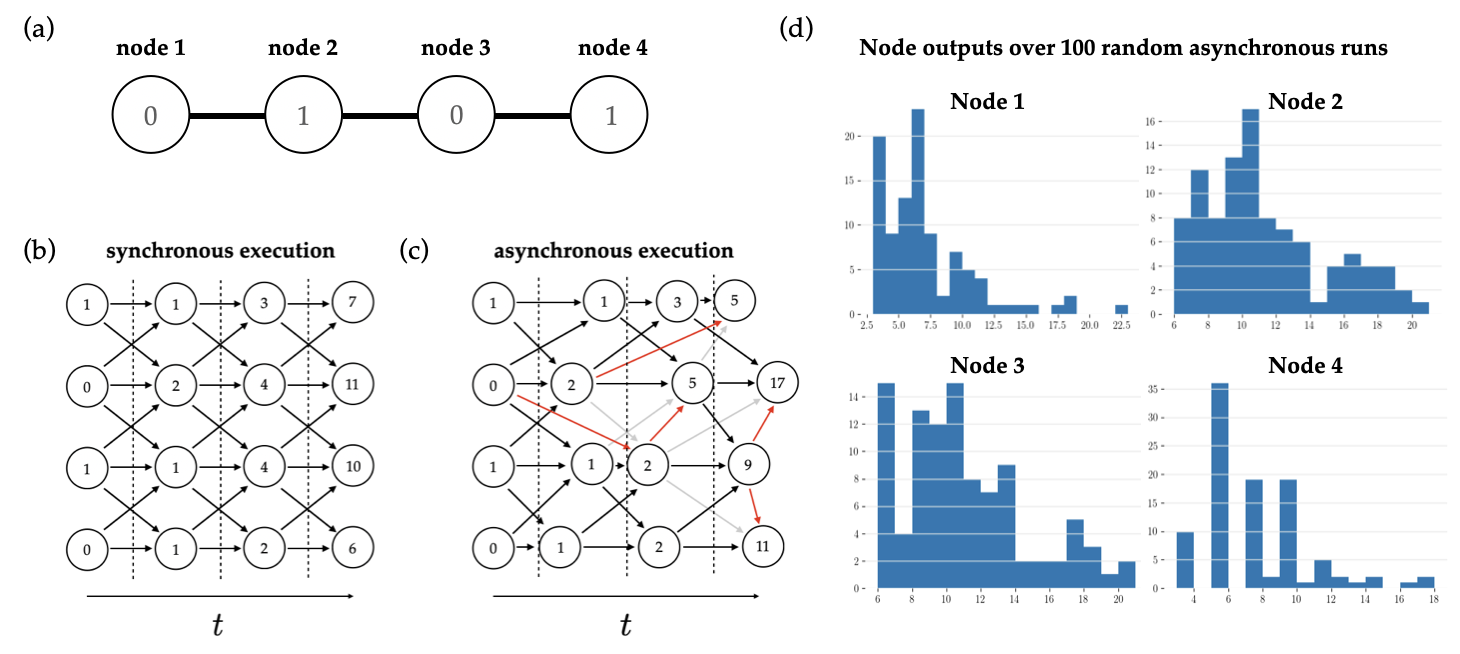}
\vspace{-0.4 cm}
\caption{\small Modifications to the computation graph of an $L$-layer message-passing GNN resulting from asynchronous, distributed, per-node inference with communication delays. \textbf{(a)} An undirected linear graph with binary features written in the node body. \textbf{(b)} Synchronous per-layer execution of a 3-layer GNN on the graph from (a); all nodes update at the same time, using neighbor information from the previous layer. Arrows represent network weights. For demonstration, weights are all equal to 1, and the value of a node embedding at the next layer is the sum of incoming values. \textbf{(c)} Asynchronous inference of the GNN from (b). Nodes update at random times and can use neighbor information corresponding to the incorrect layer, which introduces modifications to the computation graph. Gray arrows correspond to connections that were removed from the original computation graph, and red arrows are unintended connections resulting from asynchrony and/or message delays. \textbf{(d)} To demonstrate the effect of asynchrony, we show the output of the GNN varies significantly over asynchronous runs with different node update orderings.}
\label{fig:async_gnn}
\vspace{-0.6cm}
\end{figure*}

In this work we present a unified framework for GNNs which can execute inference asynchronously, adapting classical results from the distributed optimization literature. 
Our analysis illuminates the boundary between GNNs which are/are not amenable to asynchronous inference, and we articulate sufficient conditions for a given architecture's robustness to asynchrony. 
We call the class of GNNs which are provably robust to asynchronous inference `implicitly-defined'; in an implicitly-defined GNN, node embeddings correspond with a solution to an optimization problem \citep{ignn, eignn, implicit_vs_unfolded, unifiedopt, gsd, scarselli}.
This is in contrast to GNNs in which node embeddings are computed as the output of a statically defined feed-forward computation, such as graph attention networks (GAT, \citet{gat}) and graph convolutional networks (GCN, \citet{gcn}); we refer to these architectures as `explicitly-defined'.
In addition to an analysis of GNNs under asynchronous execution, we contribute a novel asynchronous-capable (implicitly-defined) architecture, which we call energy GNN. 
Our architecture exposes a rich parameterization of optimization problems using input-convex neural networks operating over nodal neighborhoods.
We show that energy GNNs outperform other implicitly-defined GNNs on a variety of synthetic tasks motivated by problems which are relevant to multi-agent systems, where synchronous inference may be undesirable. 
We also achieve competitive performance on tasks with benchmark graph datasets, certifying the merit of our approach even as a stand-alone GNN architecture.

\section{Preliminaries} \label{prelims}

\subsection{Partially Asychronous Algorithms}
\label{prelim_partially_async}
Computational models for asynchronous algorithms vary depending on the constraints imposed on the sequencing or frequency of computation or communication. 
We consider \textit{partial asynchronism} as defined by \citet{bertsekas1989parallel}, which we summarize here.

Consider a collection of $n$ nodes carrying out a distributed computation.
Each node has hidden state denoted~${\vh_i\in\mathbb{R}^k}$, which corresponds to one ``block'' (row) of the aggregate state~${\mH:=(\vh_1, \ldots, \vh_n)^T\in\mathbb{R}^{n \times k}}$.
The algorithm being executed consists of iterative node updates, where each node $i=1,2,\dots,n$ iteratively updates its state according to $\vh_i:=f^i(\mH)$ using some node-specific update function $f^i:\mathbb{R}^{n \times k}\to\mathbb{R}^k$. 

We are given a set~$T^i \subseteq \{0, 1, 2, \dots\}$ of times at which each node $i$ is updated; this accounts for asynchrony in the execution of the algorithm. 
Additionally, for each~${t \in T^i}$ we are given variables $0\leq\tau_j^i(t)\leq t$ which represent the time associated with node $i$'s view of node $j$ at time $t$.
If communication is instantaneous and reliable, then $\tau^i_j(t)$ is always equal to $t$, as node $i$'s view of all other nodes is the actual value those nodes hold at time $t$.
Staleness (i.e. $\tau_j^i(t) < t$) results from delays or losses in communication between nodes.
For instance, suppose~${T^3 = \{0, 4, 7\}}$, ${T^1 = \{2, 3, 5\}}$, and ${\tau_{1}^{3}(4)= 2}$. This means that when node 3 executes its second update (at~${t = 4}$), it uses the stale value of node 1 that was computed at~${t = 2}$ rather than its actual value at $t=4$ (due to loss or delay of the message containing the value of node 1 which was computed at~${t=3}$). 

We can now describe the asynchronous algorithm using the following node update equations:
\begin{align}
\label{eqn:async_updates}
\vh_i(t\!+\!1) &= \vh_i(t) \;\; \text{if} \; t \notin T^i, \\
\vh_i(t\!+\!1) &= f^i(\vh_1(\tau_1^i(t)), \dots, \vh_n(\tau_n^i(t))) \;\;\text{if} \; t \in T^i \label{eqn:async_updates2}
,
\end{align}
for~$t\geq 0$. 
Partial asynchronism corresponds to the following assumptions:
\begin{assumption}[Partial Asynchronism]
(\citet{bertsekas1989parallel} Assumption 5.3)
\label{assumption:bounded_time}
There exists an integer~${B>0}$ such that:
\begin{enumerate}[label=(\alph*)]
\item (Bounded time to next update) For every node $i$ and for every $t \geq 0$, at least one of the elements of the set $\{t, t + 1, \dots, t + B -1\}$ belongs to $T^i$. 
\item (Bounded staleness) There holds $t - B < \tau_j^i(t) \leq t$, 
	for all $i,j$, and all $t \geq 0$ belonging to $T^i$. 
\item There holds $\tau_i^i(t) = t$ for all $i=1,2,\dots,n$ and $t \in T^i$.
\end{enumerate}
\end{assumption}

Informally, (a) states that each node updates at least once every $B$ time units, (b) states that information from other nodes can be stale by at most $B$ time units, and (c) states that node $i$ maintains the current value of $\vh_i$.

\subsection{Graph Neural Networks}
\label{gnn_prelims}

Consider a directed graph with a collection of $n$ vertices~${V = \{1,...,n\}}$, and edges ${E \subseteq V \times V}$.
The connectivity of the graph is contained in its adjacency matrix $\mA \in \{0,1\}^{n \times n}$, where $\mA_{i,j} = 1$ if there is an edge from node $i$ to node $j$, and $0$ otherwise. 
The graph may also have associated node and edge features~${\mX \in \mathbb{R}^{n \times p}}$ and~${\mE \in \mathbb{R}^{|E| \times r}}$, so we use $\gG = (\mA, \mX, \mE)$ to denote the graph. 
We use $\mathcal{N}(i)$ to denote the set of neighbors of node $i$. 

In a GNN, each node $i$ is associated with a $k$-vector embedding $\vh_i$ which is updated through iterations (or layers) of message passing.
Each node constructs a ``message''~$\vm_{i}$ by aggregating information from nodes $j \in \mathcal{N}(i)$ in its local neighborhood, then uses this aggregate message to update its embedding.

The node update equation for all nodes $i$ and layers ${\ell \in \{0,...,L-1\}}$ is as follows:

\begin{align}
\label{eqn:gnn_node_update}
\vh_i^{\ell+1} &= f_\theta^\ell\left(\vh_i^\ell, \vx_i, \{ \vh_j^\ell, \vx_j, \ve_{ij} | j \in \mathcal{N}(i) \}\right) = u^\ell_\theta\left(\vh_i^\ell, \vx_i, \bigoplus_{j \in \mathcal{N}(i)} m^\ell_\theta(\vh_i^\ell, \vh_j^\ell, \vx_i, \vx_j, \ve_{ij})\right) = u^\ell_\theta\left(\vh_i^\ell, \vx_i, \vm_i^\ell\right),
\end{align}

where $\bigoplus$ is an aggregation function, and $u_\theta$ and $m_\theta$ are, e.g., neural networks. We use $\mH := (\vh_1, \dots, \vh_n)^T \in \mathbb{R}^{n \times k}$ to denote the aggregate embedding of nodes in the graph. $\mH^0$ is typically initialized as the node features $\mX$.

For convenience, we omit the layer index $\ell$ from $f_\theta^\ell$ in contexts where there is only one parameterized layer, or when referring generally to a message passing layer.

In the final layer, a readout function~${o_\phi: \mathbb{R}^k \rightarrow \mathbb{R}^J}$ is applied to each embedding, which results in node predictions~${\hat{\mY} = (o_\phi(\vh^L_1),...,o_\phi(\vh^L_n))^T \in \mathbb{R}^{n \times J}}$.

\section{Explicitly-defined vs. Implicitly-defined GNNs}
\label{sec:gnn_async}
In a conventional feed-forward message passing architecture that consists of~$L$ layers, it is assumed that all update functions for a given layer are applied in a synchronized manner.
That is, each node embedding is derived from the previous layer embeddings.
In distributed systems synchrony is either impossible or comes with substantial overhead.
If nodes update asynchronously according to \cref{eqn:async_updates,eqn:async_updates2}, GNN architectures that assume synchronicity will fail catastrophically and nondeterministically because the architecture at inference time is different than it was during training.
This effect is illustrated in \cref{fig:async_gnn}.

This failure motivates us to distinguish between two types of GNN architectures: \emph{explicitly-defined} GNNs which specify a specific feed-forward layer-wise computation, and \emph{implicitly-defined} GNNs in which the layer-wise message passing updates correspond to iterations toward a fixed point.
We further subdivide implicitly-defined GNNs into two types: \emph{fixed-point} GNNs and the important special case of \emph{optimization-based} GNNs.
Explicitly-defined GNNs are susceptible to failure under asynchrony, while implicitly-defined GNNs are robust (as we discuss in \cref{gnn_inference_under_partial_asynchronism}).

\subsection{Fixed-point GNNs} Fixed-point GNNs obtain node embeddings as the fixed point of a contractive message passing function. 
Using~$\vh\in\mathbb{R}^{nk}$ to denote the unrolled embeddings~$\mH$ and taking~${F_\theta: \gG \times \mathbb{R}^{nk}\to\mathbb{R}^{nk}}$ to be the aggregate update of the hidden state from applying~$f_\theta$ to the neighborhood subgraph of each node $i=1,...n$ in a graph~$\gG$, a fixed-point GNN iterates $F_\theta$ until numerical convergence of node embeddings, i.e.,~${F_\theta(\vh)\approx\vh}$. 

Convergence to a unique fixed point is guaranteed provided the update function $F_\theta$ is a contraction map with respect to the embeddings, i.e. $||F_\theta(\vh) - F_\theta(\vh')|| \leq \mu ||\vh - \vh'||$ holds for $0 < \mu < 1$ and $ \forall\vh, \vh' \in \mathbb{R}^{nk}$. 
Since it is difficult to design non-trivial parameterizations of $F_\theta$ which are contractive by construction,
existing fixed-point GNN architectures (e.g. IGNN \citep{ignn}, EIGNN \citep{eignn}, and APPNP \citep{appnp}) are limited in diversity. These architectures use linear transformations of the node embeddings in computing messages $\vm_i$, where the parameters can easily be constrained such that $F_\theta$ is contractive.
\cite{ignn} apply a component-wise non-expansive nonlinearity to compute updated node embeddings from these messages, while \cite{eignn, appnp} directly take $\vm_i$ as the node embeddings for the next iteration.

The original `nonlinear GNN' proposed by \citet{scarselli}, in which $f_\theta$ contains general multi-layer neural networks operating on the input data, represents another approach to parameterizing a fixed-point GNN. 
Their method \textit{encourages} rather than guarantees contraction via 
a multi-objective problem that includes the norm of the Jacobian of the update function as a quantity to be minimized.
This heuristic can work in practice, but the sequence of iterates does not definitively converge, particularly if node embeddings are initialized far from the fixed point (as the norm of the Jacobian is only penalized at the fixed point).
We thus do not consider this non-linear GNN to be a true fixed-point GNN.
We provide more details on existing fixed-point GNN architectures in appendix~\ref{gnn_examples}.

\subsection{Optimization-based GNNs} 
Optimization-based GNNs obtain node embeddings by minimizing a convex scalar-valued graph function~${E_\theta: \gG \times \mathbb{R}^{n \times k}\to\mathbb{R}}$ with respect to the aggregate graph node embeddings $\mH$, where $\theta$ are parameters: 
\begin{align}
    \mH^* = \textstyle\argmin_{\mH} E_\theta(\gG, \mH).\label{eqn:energy_min}
\end{align}
Assuming $E_\theta$ is separable per node, $E_\theta$ can be expressed as: 
\begin{align}
    &E_\theta(\gG, \mH) = \textstyle\sum_{i=1}^n e_\theta\left(\vh_i, \vx_i, \{\vh_j,  \vx_j, \ve_{ij} \mid j \in \mathcal{N}(i)\}\right) = \textstyle\sum_{i=1}^n e^i_\theta.\label{graph_energy} 
\end{align}

Crucially, due to the dependence of each~$e^i_\theta$ on only local information, gradient-based minimization of $E_\theta$ can be expressed per node via message passing:
\begin{align}
    \vh_i(t+1) &= \vh_i(t) - \alpha \textstyle\sum_{j \in \mathcal{N}(i) \cup \{i\}} 
    \vg_{ji} (t) \\
    \vg_{ji} (t) &:=\nabla_{\vh_i}
    e^j_\theta (\vh_j(t), \{ \vh_{j'}(t) | j' \in \mathcal{N}(j)\}), \label{embedding_update}
\end{align}
where $\alpha \in \mathbb{R}_{>0}$, and node and edge features are omitted for clarity.
We assume that at time $t$, node $i$ obtains the values $\vg_{ji}$ and $\vh_j$ (needed to compute~$\vg_{ii}$) from neighbors $j \in \mathcal{N}(i)$. The number of iterations is dictated by the convergence of the embeddings to a fixed point.

Existing optimization-based GNNs \citep{implicit_vs_unfolded, unifiedopt, gsd} use an objective $E_\theta$ where node embeddings arise as:
\begin{align}\label{gsd_eqn}
\mH^* = \textstyle\argmin_{\mH} \gamma||\mH - g_\theta(\mX)||_F^2 + \beta \text{tr}(\mH^T\mL\mH), 
\end{align}
where~${g_\theta: \mathbb{R}^{p} \rightarrow \mathbb{R}^{k}}$ and is applied independently to each node feature, $\mL$ is a function of $\mA$ and can be viewed as a generalized incidence matrix (assumed to be symmetric and positive semi-definite), $\gamma$ and $\beta$ are constants, and $\text{tr}()$ is the trace.
The first term drives node embeddings to approximate some function of the node features $\mX$, and the second term is a regularizer that rewards smoothness of neighboring node embeddings.
The overall objective $E_\theta$ is separable per node.
We refer to optimization-based GNNs which use this form of objective as GSDGNNs, since minimizing the objective can be interpreted as performing graph signal denoising (GSD).

Previous work has shown that embeddings obtained by the fixed-point message passing scheme defined by APPNP \citep{appnp} and EIGNN \citep{eignn} correspond to minimization of this form of objective \citep{gsd, implicit_vs_unfolded}.
Correspondence between fixed-point GNNs and optimization-based GNNs does not always exist; while node embeddings in optimization-based GNNs can always be expressed as satisfying a fixed-point equation (i.e., the gradient of the convex graph function is equal to zero at the solution), not all fixed-point equations correspond with convex optimization problems.
For this reason, we distinguish between these two types of implicit GNNs.

\section{Asynchronous GNN Inference}
\label{gnn_inference_under_partial_asynchronism}
In this section, we discuss GNN inference under the asynchronous execution model presented in \cref{prelim_partially_async}. In the general asynchronous update in \Cref{eqn:async_updates2} it is assumed that node $i$ has access to all of the values $\vh_1,...\vh_n$ required for performing its update.
In general these values could be obtained, for example, by providing nodes access to a shared memory structure which contains these values.
However, in this work, we are interested the setting where neither centralized memory nor a centralized controller are used. 
We assume that individual nodes store their own values $\vh_i$ (and $\vx_j$, $\ve_{ij}$), and if these values are needed by other nodes to perform their update, they must obtain them by communicating with neighbors.
Our aim in this section is to demonstrate that node updates can be performed \textit{only} using information that can be obtained through local communication.

\subsection{Explicitly-defined GNN and Fixed-point GNN Inference under Partial Asynchronism}
\label{fp_gnn_partial_asynchronism}
Without loss of generality, assume that the embedding dimension $k$ is fixed for all layers of parameterized node update functions.
We do not write $f_\theta$ indexed by layer, but this is straightforwardly generalized to the case of layer-specific parameters and functions described in \Cref{gnn_prelims}. As a slight abuse of notation, let $f_\theta^i$ denote $f_\theta$ applied to node $i$'s neighborhood. 
Without loss of generality, assume the embedding update function $f_\theta$ is continuously differentiable, so that the following restriction can be stated:  
\begin{align}
\label{eq:nbr_dependence1}
j \notin \mathcal{N}(i) \; \implies \frac{\partial  f^i_\theta}{ \partial \vh_j} (\vz) = 0 \quad  \forall \vz \in \mathbb{R}^k.
\end{align}
With this restriction, the general node update from \cref{eqn:async_updates2} can be adapted to describe partially asynchronous message passing, in which node updates are performed using only information from a node's local neighborhood. In particular, for node $i$ and for $t \geq 0$ and $t \in T^i$, the update equation is: 
\begin{align}
\label{eqn:fp_gnn_update}
\vh_i(t+1) &= f_\theta(\vh_i(t), \{\vh_j(\tau_j^i(t)) \mid j \in \mathcal{N}(i)\})\,
\end{align}
where we omit node and edge features for clarity. 
Note the crucial difference introduced by asynchrony: the neighbor data $\vh_j(\tau_j^{i}(t))$ may correspond to the incorrect layer in the network. 
For explicitly-defined GNNs such as GCN or GAT, the number of iterations $|T_i|$ executed by each node is fixed and equal to the number of layers $L$ in the GNN.
For fixed point GNNs, the number of iterations is not pre-specified.

Since explicitly-defined GNNs implement a specific feed-forward neural network architecture, inference using \cref{eqn:fp_gnn_update} corrupts the computation performed by the network. 
We illustrate this in Figure \ref{fig:async_gnn}, where asynchrony results in a (different) computation graph with some connections removed, and new connections that are not present in the original synchronous computation graph. 
This means that there are no convergence guarantees under partial asynchrony, and in general the final node embeddings may vary significantly with respect to the particular node update sequence.

In contrast, fixed-point GNNs in which $F_\theta$ is contractive with respect to embeddings $\vh$ are provably robust to partially asynchronous inference.
In particular, they satisfy the assumptions of the following proposition \citep{bertsekas1983distributed}.
\begin{proposition}
\label{prop:convergence_fp}
If~$F_\theta: \gG \times \mathbb{R}^{nk} \rightarrow \mathbb{R}^{nk}$ is contractive with respect to node embeddings $\vh$, then under the bounded staleness conditions in \cref{assumption:bounded_time}, the fixed-point iteration of Equation~\ref{eqn:fp_gnn_update} converges.
\end{proposition}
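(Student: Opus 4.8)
The plan is to recognize that the asynchronous fixed-point iteration in Equation~\ref{eqn:fp_gnn_update} is exactly an instance of the general partially asynchronous iteration in Equations~\ref{eqn:async_updates}--\ref{eqn:async_updates2}, specialized to $f^i = $ the $i$-th block of $F_\theta$, and then to invoke the classical asynchronous convergence theorem for contraction maps (the ``asynchronous contraction'' or Bertsekas box-contraction result, \citet{bertsekas1983distributed}; see also \citet{bertsekas1989parallel}, Section 6.2). So the proof is essentially a verification that the hypotheses of that theorem are met.

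First I would set up notation: write $\vh \in \mathbb{R}^{nk}$ for the unrolled embeddings, let $F_\theta = (F_\theta^1, \dots, F_\theta^n)$ with $F_\theta^i: \mathbb{R}^{nk} \to \mathbb{R}^k$ the update for node $i$, and observe that the neighbor-sparsity condition \eqref{eq:nbr_dependence1} means $F_\theta^i$ depends only on $\vh_i$ and $\{\vh_j : j \in \mathcal{N}(i)\}$, so that \eqref{eqn:fp_gnn_update} is precisely \eqref{eqn:async_updates2} with the stale arguments $\vh_j(\tau_j^i(t))$. Next I would record that contractivity of $F_\theta$ with modulus $\mu \in (0,1)$ in the (unrolled) norm $\|\cdot\|$ gives, via the Banach fixed-point theorem, a unique fixed point $\vh^*$ with $F_\theta(\vh^*) = \vh^*$. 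The third step is to exhibit the nested sequence of sublevel sets required by the asynchronous convergence theorem: define $X(r) = \{\vh : \|\vh - \vh^*\| \le \mu^r \|\vh(0) - \vh^*\|\}$ for $r = 0, 1, 2, \dots$. Contractivity gives $F_\theta(X(r)) \subseteq X(r+1)$, the sets are nested, and $\bigcap_r X(r) = \{\vh^*\}$. Finally, combining this with Assumption~\ref{assumption:bounded_time} — part (a) ensures every node updates within any window of $B$ steps, part (b) bounds staleness by $B$, part (c) fixes the self-index — the asynchronous convergence theorem yields that the iterates $\vh(t)$ eventually enter and remain in $X(r)$ for every $r$, hence $\vh(t) \to \vh^*$.

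One subtlety worth flagging is that the standard box-contraction theorem is usually stated for a contraction in a \emph{block-maximum} (weighted sup) norm, $\|\vh\| = \max_i w_i^{-1}\|\vh_i\|_i$, because that is what makes the sublevel sets $X(r)$ Cartesian products of per-block balls — which is exactly what lets each asynchronous block update be analyzed independently. If the proposition intends contractivity in an arbitrary norm (e.g.\ Euclidean on $\mathbb{R}^{nk}$), then strictly one should either (i) restrict to the block-maximum norm, or (ii) invoke the more general asynchronous convergence result that only requires the nested-set ``box condition'' with $X(r) = \prod_i X_i(r)$ and $F_\theta^i(X(r)) \subseteq X_i(r+1)$. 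I would therefore state the hypothesis in the block-maximum norm (or note that an equivalence of norms argument loses the product structure, so the cleaner route is to assume the block form), and this is the main place where care is needed; the rest is a direct citation. The bounded-delay and bounded-update-interval parts of Assumption~\ref{assumption:bounded_time} enter only to guarantee that after finitely many steps all stale data used in any update was itself drawn from $X(r)$, which is the standard inductive step in the theorem's proof and need not be reproduced here.
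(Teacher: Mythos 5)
Your proposal takes the same route as the paper, which offers no proof of this proposition beyond the citation to \citet{bertsekas1983distributed}: identify Equation~\ref{eqn:fp_gnn_update} as an instance of the general partially asynchronous iteration of Equations~\ref{eqn:async_updates}--\ref{eqn:async_updates2} and invoke the classical asynchronous convergence theorem via the nested-sets/box condition under Assumption~\ref{assumption:bounded_time}. Your caveat that contractivity must be taken in a block-maximum (weighted sup) norm---so that the sublevel sets $X(r)$ factor as Cartesian products over blocks and each per-node update can be analyzed independently---is exactly the right subtlety to flag, since the paper states contractivity in an unspecified norm and the classical theorem does not go through for, e.g., a plain Euclidean contraction.
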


\subsection{Optimization-based GNN Inference under Partial Asynchrony} 
\label{opt_based_gnn_inference}
In order to examine inference of optimization-based GNNs under partial asynchrony, we assume gradient-based optimization is used in computing node embeddings. 
Recall that optimization-based GNNs are performing a minimization of a separable objective $E_\theta$, as defined in \cref{graph_energy}. 
We thus state the following restriction, analogous to the restriction in \cref{eq:nbr_dependence1}:
\begin{align}
\label{eq:nbr_dependence2}
j \notin \mathcal{N}(i) \implies \frac{\partial e^i_\theta}{\partial \vh_j}(\vz) = 0 \; \text{for all} \; \; \vz \in \mathbb{R}^{k}. 
\end{align}
We could then naively adapt the general node update equations from \cref{eqn:async_updates} to describe partially asynchronous, gradient-based minimization of $E_\theta$ as follows:
\begin{align} 
    \vh_i(t+1) &= \vh_i(t) - \alpha \textstyle\sum_{j \in \mathcal{N}(i) \cup \{i\}} 
    \vg_{ji} (\tau^i_j(t)) \label{eqn:egnn_h_update}\\
    \vg_{ji} (\tau^i_j(t)) &:=\nabla_{\vh_i}
    e^j_\theta (\vh_j(\tau^i_j(t)), \{ \vh_{j'}(\tau^i_{j'}(t)) | j' \in \mathcal{N}(j)\}), \label{eqn:egnn_g_update1}
\end{align}
where $\alpha \in \mathbb{R}_{>0}$ is the step size, and node and edge features are omitted for clarity. 
This formulation would allow us to directly cite a proof of convergence from \citet{bertsekas1989parallel}, as we did in \Cref{fp_gnn_partial_asynchronism} for partially asynchronous fixed-point GNN inference.

However, recall that for a node to perform an update using only local communication, we previously assumed that $\vg_{ji}$ was obtained by node $i$ from its neighbor $j$. 
With the formulation in Equations~\ref{eqn:egnn_h_update} and \ref{eqn:egnn_g_update1}, node $j$ cannot provide node $i$ with $\vg_{ji}$ since the value depends on node $i$'s view of the embeddings (and features) of the neighbors of node $j$, rather than deferring to node $j$'s view of its neighbors.
That is, node $i$ needs access to information about its 2-hop neighbors in addition to its 1-hop neighbors. Since we assume communication is only possible with 1-hop neighbors, 2-hop neighbor information would need to be forwarded by direct neighbors of node $i$.
This inflates the cost of communication, requiring a number of bits per transmission which is proportional to $|\mathcal{N}(j)|$.
Furthermore, as the number of neighbors that are shared between node $i$ and node $j$ increases, the contents of these messages become increasingly redundant. 
 
Instead, we preserve fully local communication and fixed-size transmissions where neighbors $j$ send fixed-size packets containing only ($\vh_j$, $\vg_{ji}$) to node $i$ by defining $\vg_{ji}(\tau^i_j(t))$ as follows:
\begin{align}
\vg_{ji}(\tau^i_j(t)) &:= \nabla_{\vh_i} e^j_\theta(\vh_j(\tau^i_j(t)), \{\vh_{j'}(\tau^j_{j'}(\tau^i_j(t))) : j'\in\mathcal{N}(j)\}).\,\label{eqn:egnn_g_update2}
\end{align}
The crucial difference between equations \cref{eqn:egnn_g_update1} and \cref{eqn:egnn_g_update2} is that instead of $\vg_{ji}(\tau^i_j(t))$ depending on node $i$'s view of 2-hop neighbors $j'$ at time~$t$, it now depends on neighbor $j$'s view of its neighbors at time $\tau^i_j(t)$, the time corresponding to node $i$'s view of $j$ at time $t$.

\begin{proposition}
\label{prop:convergence}
If~$E_\theta$ is strongly convex and separable per node, and is twice differentiable (w.r.t. $\mH$) with a Hessian of bounded norm, then for a sufficiently small step size and the bounded staleness conditions in \cref{assumption:bounded_time}, the optimization procedure of Equations~\ref{eqn:egnn_h_update} and~\ref{eqn:egnn_g_update2} will converge when executed under partial asynchrony.
\end{proposition}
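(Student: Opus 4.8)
The plan is to show that Equations~\ref{eqn:egnn_h_update} and~\ref{eqn:egnn_g_update2}, taken together over all $n$ nodes, are an instance of the partially asynchronous gradient method analyzed by \citet{bertsekas1989parallel}, albeit with an enlarged staleness bound, and then to invoke their convergence theorem for strongly convex objectives. First I would confirm the synchronous baseline: by the per-node separability in \cref{graph_energy} and the locality restriction \cref{eq:nbr_dependence2}, the only terms $e^j_\theta$ with $\nabla_{\vh_i} e^j_\theta \ne 0$ are those indexed by $j \in \mathcal{N}(i) \cup \{i\}$, so $\sum_{j \in \mathcal{N}(i)\cup\{i\}} \vg_{ji} = \nabla_{\vh_i} E_\theta(\gG,\mH)$ and \cref{eqn:egnn_h_update} is exactly gradient descent, $\vh_i(t{+}1) = \vh_i(t) - \alpha\,\nabla_{\vh_i} E_\theta$. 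Under asynchrony, \cref{eqn:egnn_g_update2} makes $\sum_j \vg_{ji}$ a \emph{stale} estimate of $\nabla_{\vh_i} E_\theta(\mH(t))$, in which every embedding block entering the update of $\vh_i$ at time $t$ is either some $\vh_j$ read at time $\tau^i_j(t)$ or some $\vh_{j'}$, $j' \in \mathcal{N}(j)$, read at the nested time $\tau^j_{j'}(\tau^i_j(t))$.

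Next I would bound this effective staleness. Applying \cref{assumption:bounded_time}(b) once gives $\tau^i_j(t) > t - B$; since $\tau^i_j(t) \in T^j$, applying (b) again at time $\tau^i_j(t)$ gives $\tau^j_{j'}(\tau^i_j(t)) > \tau^i_j(t) - B > t - 2B$. Hence every block used in node $i$'s update at time $t$ is at most $2B - 1$ steps out of date, while parts (a) and (c) of \cref{assumption:bounded_time} carry over verbatim (in particular $\tau^i_i(t) = t$, so node $i$ uses its own current embedding where needed). Thus the aggregated iteration is a partially asynchronous gradient descent on $E_\theta$ with the staleness constant $B$ replaced by $B' := 2B$.

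Finally I would apply the convergence theorem of \citet{bertsekas1989parallel} for partially asynchronous gradient methods. Its hypotheses hold: $E_\theta$ is strongly convex by assumption; $\nabla E_\theta$ is globally Lipschitz (with constant $L$, say) because each $e^i_\theta$ has Hessian of bounded norm and there are finitely many nodes; and strong convexity makes $E_\theta$ coercive, hence bounded below, with a unique minimizer equal to $\mH^*$ from \cref{eqn:energy_min}. The theorem then yields a threshold $\alpha_0 = \alpha_0(L, B', n) > 0$ such that for every step size $\alpha \in (0, \alpha_0)$ the iterates $\mH(t)$ converge, and by strong convexity the limit is $\mH^*$.

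The step I expect to be the main obstacle is the reduction claimed in the first paragraph: $\sum_j \vg_{ji}$ is not literally the $i$-th block gradient of $E_\theta$ evaluated at a single point with one delay per block, because a shared two-hop neighbor $m \in \mathcal{N}(j_1) \cap \mathcal{N}(j_2)$ can enter two summands with \emph{different} timestamps, and the delays in \cref{eqn:egnn_g_update2} are nested rather than flat. I would handle this by entering the descent-lemma step of the Bertsekas--Tsitsiklis argument, writing $\sum_j \vg_{ji} = \nabla_{\vh_i} E_\theta(\mH(t)) - r^i(t)$, and re-deriving the estimate $\|r^i(t)\| \le L \sum_{s=t-B'}^{t-1} \|\mH(s{+}1) - \mH(s)\|$, which uses only blockwise $B'$-staleness of the arguments and is therefore insensitive to inconsistency across summands; an alternative is to augment each node's state with a bounded history buffer so that the iteration becomes a verbatim instance of their framework. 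Everything downstream --- telescoping the oscillation terms against the per-step decrease of $E_\theta$, deducing summability of $\sum_t \|\nabla E_\theta(\mH(t))\|^2$, and concluding $\mH(t) \to \mH^*$ under strong convexity --- then follows as in the cited reference.
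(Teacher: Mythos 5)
Your proposal is correct and follows essentially the same route as the paper's proof: reduce the scheme to the Bertsekas--Tsitsiklis partially asynchronous gradient framework, observe that the nested delays $\tau^j_{j'}(\tau^i_j(t))$ make the effective staleness $2B$, and handle the fact that $\sum_j \vg_{ji}$ is not a gradient at a single consistently-delayed point by bounding its deviation from a reference gradient via Lipschitz continuity of $\nabla E_\theta$. The only (cosmetic) difference is that the paper packages this deviation bound as a verification of Bertsekas--Tsitsiklis's descent-direction hypotheses (their Assumption 5.5, invoking an additional bound $K_0$ on the gradient norms), whereas you propose to carry the residual $r^i(t)$ through the descent-lemma/telescoping step directly, which reaches the same conclusion.
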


See \cref{appendix:convergence} for a proof adapting results from \citet{bertsekas1989parallel}.

\section{Energy GNNs} \label{energy_gnn}

Under the assumptions of \Cref{prelim_partially_async}, implicitly-defined GNNs in which node embeddings are updated iteratively using local information are well suited for partially asynchronous, decentralized, and distributed inference. 
However, the diversity of existing implicit GNN architectures is limited (see \Cref{sec:gnn_async}). 

We propose a novel implicitly-defined, optimization-based GNN architecture which we call the \textit{energy GNN}.
Energy GNNs compute node embeddings that minimize a parameterized, convex graph function $E_\theta$, which we refer to as the `energy' function. 
In contrast to previous work on optimization-based GNNs, our energy function makes use of partially input-convex neural networks (PICNNs, \citet{icnn}).
PICNNs are scalar-valued neural networks that constrain the parameters in such a way that the network is convex with respect to a specified subset of the inputs. 
This exposes a rich and flexible class of convex energy functions $E_\theta$ of the form:
\begin{align}
    E_\theta(\gG, \mH) &= \textstyle\sum_{i=1}^n e^i_\theta \qquad \text{where}\\
    e^i_\theta &= u\left(\vm_i, \vh_i, \vx_i ; \theta_u \right) + (\nicefrac{\beta}{2})||\vh_i||_2^2\,\label{introduce_scalar_energy} \\
    \vm_i &= \textstyle\sum_{j \in \mathcal{N}(i)} m(\vh_i, \vh_j, \vx_i, \vx_j, \ve_{ij} ; \theta_m) \label{energy_message} 
\end{align}
where $m$ is a function that is both convex and nondecreasing (in each dimension) with respect to~$\vh_j$ and~$\vh_i$, and the function~$u$ is convex with respect to~$\vm_i$ and~$\vh_i$.
These functions are both implemented as PICNNs and their composition is convex.
Summing these functions along with the squared norm penalty results in~$E_\theta$ being strongly convex with respect to the node embeddings~$\mH$.
This architecture for the energy can be described as a (single layer) partially input-convex GNN; we provide more details in \Cref{picgnns}.

This formulation for $E_\theta$ offers significantly more flexibility than the architectures of other implicitly-defined GNNs. 
The functions $m$ and $u$ are parameterized by multi-layer PICNNs, and any combination of inputs to $m$ and $u$ are valid provided that $E_\theta$ remains a convex function of $\mH$.
This means that edge features are easily incorporated into the model, and neighbor-specific or neighbor agnostic messages can be used (e.g., $m$ can take in information from just a node's neighbor, or information pertaining to both a node and its neighbor); in our experiments in \Cref{experiments} we show that this translates empirically to improved performance on various tasks.
Since the aggregation in \cref{energy_message} is only constrained to be a non-negative sum over neighbors, it can be replaced with, for example, a mean or a sum weighted by the entries of the symmetric renormalized adjacency matrix (as in GCNs, see \Cref{gnn_examples}).
Alternatively, \cref{energy_message} can easily incorporate a neighbor attention mechanism (as in GATs), where neighbor contributions to the sum are scaled by neighbor-specific attention weights (see \Cref{picgnns}).
The attention weights can depend on any of the non-convex inputs (i.e., the features). 

\section{Experiments}

\subsection{Synthetic Multi-Agent Tasks}
We perform experiments on several synthetic datasets, motivated by prediction tasks which are of interest for multi-agent systems where distributed, asynchronous inference is desirable. 
We describe each task and associated dataset below.

\paragraph{Chains}
\label{chains}

The ability to communicate information across long distances in a group of agents is important when agent predictions depend on global information. 
This communication is made more difficult in the absence of a central controller (as is the case for distributed, asynchronous inference). 
The chains dataset, used in \cite{ignn, eignn}, is meant to evaluate the ability to capture long-range dependencies between nodes. 
The dataset consists of $p$ undirected linear graphs with $l$ nodes, with each graph having a label $k \in \{1,...,p\}$.
The task is node classification of the graph label, where class information is contained \textit{only} in the feature of the first node in the chain; the node feature matrix $\mX \in \mathbb{R}^{n \times p}$ for a graph with class $k$ has $\mX_{1, k}  = 1$ and zeros at all other indices. 
Perfect classification accuracy indicates that information is successfully propagated from the first node to the final node in the chain. 
For our dataset, we use chains with length $l=100$.

\paragraph{Counting}

Counting the number of agents in a group may be important in various multi-agent tasks.
This value can be used, for example, to calculate means, or in agent decisions which rely on group size.
We construct a dataset meant to evaluate the ability of GNNs to count in undirected chain graphs.
Our dataset consists of $50$ graphs with $1$-$50$ nodes.
Since no informative node features are present for this task, we set node features as one-hot embeddings of node degrees.
The prediction target for each node in a given graph is the total number of nodes in that graph. 

\paragraph{Sums}

For this task, we consider summation, a basic functional building block relevant for many multi-agent tasks. 
For instance, in reinforcement learning tasks, agents might aim to perform actions that optimize their collective rather than individual rewards, requiring each agent to sum the rewards associated with all other agents.
Many distributed and asynchronous algorithms exist for summation \citep{kempe2003gossip}. 
We construct a dataset to evaluate the ability of GNNs to perform binary sums in undirected chain graphs.
Our data are $2000$ graphs with $50$ nodes each, with different instantiations of binary node features $\vx_i \in \{0, 1\}$. 
The prediction target for each node in a given graph is $\hat{\vy}_i := \sum_{i} \vx_i$.

\paragraph{Coordinates}

 A common task for multi-agent collectives such as robot swarms is localization. 
 This problem has previously been tackled in various ways that all employ a bespoke algorithm tailored for the task \citep{localization1, localization2, localization3}. 
 We test the ability of GNNs to solve this problem on static graphs. 
 We construct a dataset where each node has a position in $\mathbb{R}^2$ and neighbors within some radius are connected by an edge. 
 We do not assume a global coordinate system; instead, we focus on relative localization, where pairwise distances between nodes are maintained.
Each node predicts a position in $\mathbb{R}^2$, and the objective is the mean squared error between true pairwise node distances, and distances between their predicted positions. 
In order to break symmetries, each node has a unique ID which is one-hot encoded and used as the node feature. Distances to connected neighbors are provided as edge features.
We generate $1500$ random graphs where all graphs consist of $20$ nodes.
We sample uniformly in the unit square to get node positions and connect nodes by an edge if they are within a distance of 0.5. 

\paragraph{MNIST ``Terrain"}

The final synthetic task we consider is ``terrain'' classification. 
Suppose a number of agents are placed in an environment where each agent performs some local measurement, and the agents must collectively make predictions about some global state of the environment using only local communication.
For this experiment, we use MNIST images (those with 0/1 labels only) to represent the environment \citep{lecun2010mnist}. 
Agents are placed at random locations in the image, and use the coordinates and pixel value at their location as their node features. 
The prediction target for each agent is the image label.
We resize the images to $10\times 10$ pixels, and sample 10 random pixels for agent locations. 
Nodes share an edge if they are within 5 pixels of each other.
Unlike the previous synthetic tasks, in which existing distributed algorithms can be applied, no bespoke algorithm exists for the MNIST terrain task.
This is precisely the type of problem which motivates the development of GNNs which are robust to asynchronous and distributed inference.

\subsection{Experimental Setup}
For each synthetic task, we compare performance of energy GNNs to other implicitly-defined GNNs we identified in \cref{sec:gnn_async}.
We employ three energy GNN architecture variants; node-wise, where messages are constructed using information from individual nodes, edge-wise, where messages are constructed using information pertaining to both nodes on an edge, including edge features, and edge-wise energy GNN with neighborhood attention. 
In terms of other implicitly-defined GNNs, we focus on the fixed-point GNN architecture IGNN defined by \citet{ignn} which is described in \Cref{gnn_examples}, and against GSDGNN, an optimization-based GNN which uses the objective from \cref{gsd_eqn}.
Two fixed point GNN architectures identified in \cref{sec:gnn_async} are excluded: EIGNN \citep{eignn} and the GNN proposed by \citet{scarselli}. 
The former is excluded because the fixed point is solved for directly in the forward pass using global information rather than iteratively using local information; the latter is excluded because fixed point convergence may not be achieved.
In addition to implicitly-defined GNNs, we also compare against two common explicitly-defined GNN architectures; GCN \citep{gcn}, and GAT \citep{gat}.

The cost of the forward and backward pass (in terms of computation and/or memory) for implicitly-defined GNNs is variable, depending on the number of iterations required for convergence.
We mitigate this cost during training in two ways.
Since convergence for both fixed-point GNNs and optimization-based GNNs is guaranteed implicit differentiation can be used to obtain gradients of the task-specific loss function $\mathcal{L}$ with respect to parameters~$\theta$.
This avoids unrolling the fixed-point iterations in the backward pass, and requires a fixed amount of computation and memory.
We derive the gradient in \cref{implicit_diff}.
Furthermore, since the solution of the forward pass is unique (i.e. not dependent on the initialization of $\mH$), the number of iterations in the forward pass can be reduced by initializing $\mH$ to be the solution from the previous epoch of training.
In our work we employ both of these strategies during training. 
Additional training details for the synthetic experiments are in Appendix \ref{experimental_details}.

We additionally perform experiments on benchmark datasets MUTAG \citep{mutag}, PROTEINS \citep{PROTEINS}, and PPI \citep{graphsage} for node and graph classification to evaluate energy GNNs as a synchronous GNN architecture.
Although our objective is not performance in the synchronous setting, we show that they are nevertheless competitive on each dataset.
Details are provided in Appendix \ref{experimental_details_benchmark}.

\subsection{Results} \label{experiments}

In \Cref{async_gnns}, we report synchronous performance of each GNN architecture on the synthetic tasks. 
For regression tasks (counting, sums, coordinates) task performance is calculated as the root mean squared error over the test dataset normalized by the root mean value of the test dataset prediction targets. 
For classification tasks (chains, MNIST) task performance is calculated as the mean test dataset classification error. 
\Cref{async_gnns} reports performance for each task, with mean and standard deviation taken across 10 dataset folds and 5 random parameter seeds.

The results of the synthetic experiments empirically demonstrate the superiority of our energy GNN architecture compared to other implicitly-defined GNNs. 
The node-wise energy GNN architecture improves performance over IGNN and GSDGNN, which we attribute to the use of PICNNs.
When edge-wise rather than node-wise information is used in constructing messages to neighbors, further improvements in performance are observed. 
The strong performance on tasks requiring long-distance communication between nodes for correct predictions (chains, counting, and sums) shows that our architecture is capable of capturing long-range dependencies between node predictions. 

\begin{table*}[ht]
\caption{\small Task performance on test data, reported as percentage error (relative root mean squared error for COUNT, SUM, COORDINATES). Mean and standard deviation are across 10 random seeds and 5 train/test splits.  Although inference on these experiments is done synchronously, the poor performance of the explicitly-defined GCN and GAT can be attributed to their depth-limited ability to propagate information.}
\label{async_gnns}
\begin{center}
\scalebox{1}{
\begin{sc}
\begin{tabular}{llllll}
\toprule\\
     [-1.5ex]
      \multicolumn{1}{c}{\bf MODEL}  &\multicolumn{1}{c}{\bf CHAINS}  &\multicolumn{1}{c}{\bf COUNT}  & \multicolumn{1}{c}{\bf SUM} &\multicolumn{1}{c}{\bf COORDINATES} &\multicolumn{1}{c}{\bf MNIST} \\[0.7ex]
    \hline \\[-1.5ex]
    IGNN & $26.9 \pm 13.3$ & $40.2 \pm 5.5$ & $12.8 \pm 0.8$& $ 52.0 \pm 5.5$ & $30.4 \pm 0.7$\\
    GSDGNN & $35.9 \pm 6.3$ & $40.3 \pm 0.6$ & $13.3 \pm 0.2$ & $ 44.0 \pm 1.0$ & $29.3 \pm 0.6$\\
    Energy GNN node-wise & $15.8 \pm 17.9$ & $19.5 \pm 1.7$ & $12.2 \pm 0.7$ & $41.3 \pm 2.4$ & $13.8 \pm 0.8$ \\
     Energy GNN edge-wise & $1.2 \pm 2.2$  & $4.0 \pm 3.6$ & $\mathbf{4.9 \pm 3.3}$ & $33.5 \pm 3.2$& $\mathbf{13.0 \pm 0.6}$\\
     Energy GNN + attention & $\mathbf{0.25 \pm 0.5}$ & $\mathbf{3.6 \pm 3.8}$ & $6.0 \pm 4.0$ & $\mathbf{30.9 \pm 1.8}$ & $13.8 \pm 0.8$ \\
     \hline \\[-1.5ex]
      GCN & $47.0 \pm 0.0$ & $40.7 \pm 1.0$ & $13.1 \pm 0.3$ &  $53.2 \pm 0.9$ & $29.7 \pm 0.5$ \\
      GAT & $47.0 \pm 0.0$ & $41.5 \pm 0.8$ & $12.9 \pm 0.5$ & $39.3 \pm 1.0$ & $15.3 \pm 3.2$ \\
\bottomrule
\end{tabular}
\end{sc}
}
\end{center}
\end{table*}

\vspace{-10.0pt}
\begin{table*}[ht]
\caption{\small Decrease in task performance (decrease in accuracy for CHAINS, MNIST, and increase in relative RMSE for COUNT, SUM, COORDINATES) observed from switching from synchronous to asynchronous inference on sub-sample of test data (10 samples) using one trained model instance.
Mean and standard deviation are across 5 asynchronous runs.
The poor performance of GCN and GAT are consistent with the expected unreliability of explicitly-defined GNNs with asynchronous inference. Decreases in task performance for all implicitly-defined GNNs (IGNN, GSDGNN, and energy GNN variants) is less than $0.1\%$ (i.e. result from numerical error); these values are omitted from the table for concision.
}
\label{async_gnns_deviation}
\begin{center}
\scalebox{1}{
\begin{sc}
\begin{tabular}{llllll}
\toprule\\
     [-1.5ex]
      \multicolumn{1}{c}{\bf MODEL}  &\multicolumn{1}{c}{\bf CHAINS}  &\multicolumn{1}{c}{\bf COUNT}  & \multicolumn{1}{c}{\bf SUM} &\multicolumn{1}{c}{\bf COORDINATES} &\multicolumn{1}{c}{\bf MNIST} \\[0.7ex]
    \hline \\[-1.5ex]
      GCN & $38.8 \pm 3.3$ & $584.6 \pm 42.4$ & $2.6 \pm 0.2$ & $63.4 \pm 0.0$ & $37.4 \pm 10.3$ \\
      GAT & $6.6 \pm 1.8$ & $250.3 \pm 58.6$ & $45.1 \pm 2.1$ & $97.0 \pm 34.4$ & $50.4 \pm 3.5$ \\
\bottomrule
\end{tabular}
\end{sc}
}
\end{center}
\end{table*}

In \Cref{async_gnns_deviation}, we demonstrate empirically that explicitly-defined architectures such as GCN and GAT perform poorly and unreliably under asynchrony, with task performance decreasing for all experiments.
In our experiments, we simulate asynchronous inference; our algorithm is in \Cref{asynchronous_gnn_implementation}.
In most cases the variance of the performance decrease is large as a result of inconsistent predictions under different random node update schedules and communication delays.
In cases where the variance is low, we observe that predictions for different random schedules collapse to the same/similar values due to non-linearities in the architecture. 
For implicitly-defined GNNs, the decrease in performance under asynchronous inference is less than 0.1\%, empirically confirming the convergence guarantees given by \cref{prop:convergence_fp} and \cref{prop:convergence}.

\section{Related work}

\citet{dudzik2023asynchronous} axiomatically derive GNN layers which are invariant to asynchrony. For instance, using max-aggregation for messages is by construction invariant to asynchrony; messages from neighbors can be processed as they arrive rather than waiting for all inputs to be available, and the final aggregated message will still be correct. Later work in this thread aims at enforcing asyncrony invariance through a self-supervised loss function rather than using layers which are asynchrony invariant by construction \citep{asynchrony_invariance_loss}. The motivation for this line of work is to apply GNNs for neural algorithmic reasoning, where the GNN is trained to emulate classical algorithms such as the Bellman-Ford algorithm. Enforcing asyncrony invariance in the GNN serves to improve alignment between the algorithm to be learned and the GNN computation. The asynchronous model of computation used in these works differs from ours in that nodes must maintain information from all layers in the GNN, rather than executing layer-wise.

\citet{gwac} also explore asynchronous communication between nodes during GNN inference. Their approach consists of processing node updates in a queue, rather than applying all node updates in a layer in parallel. The goal of their work is to leverage sequential node updates to improve under-reaching and over-squashing in learned node embeddings. Asynchronous execution in their model is deterministic in that there are no random delays in communication between nodes; nodes receive messages and perform updates in the same order across different runs. They show that random delays and node update orderings leads to unstable training and results in decreased task performance. This makes their approach unsuitable for the settings we consider, in which asynchrony is not controlled. 

\section{Conclusion}
\vspace{-0.1cm}
GNNs have the potential to provide learning frameworks for decentralized multi-agent systems, with applications to robotics, remote sensing, and other domains.
However, asynchronous execution and unreliable communication between agents are common features in real-world deployment, and conventional GNN architectures do not generate reliable predictions in this setting.
In this work, we characterize the class of implicitly-defined GNNs as being provably robust to partially asynchronous inference. 
Motivated by lack of diversity in this class of architectures, we contribute a novel addition in the form of energy GNNs, which achieve better performance than other implicitly-defined GNNs on a number of synthetic multi-agent tasks.

The positive results of our synthetic experiments motivates additional work in applying GNN architectures to multi-agent systems, particularly making use of implicitly-defined GNNs to generate state representations for control tasks, rather than for prediction tasks.
A specific line of work which we expect to be interesting is real-time inference on \textit{dynamic} graphs because of the relevance to problems in, e.g., robotics.
Distributed and asynchronous inference for datasets consisting of large graphs, where inference must be distributed among multiple processors due to scale, is another application for implicitly-defined GNNs which should be explored.

There are several limitations of this work, particularly in training implicitly-defined GNNs.
The forward pass requires convergence of node embeddings via a fixed point iteration, and requires an amount of time/computation that cannot be known in advance.
During training, we decrease the number of iterations required for convergence by initializing node embeddings to those from the previous epoch of training.
Real-time inference in multi-agent systems is likely to also enjoy the benefits of a warm-start initialization. 
This is because the graph features and structure are often expected to change slowly over time, meaning that after an initial ``boot-up'' from random initialization, a continuously operating implicitly-defined GNN is likely to have embeddings initialized close to the solution from one time step to the next (assuming the node embeddings $\mH*$ are not extremely sensitive to small changes in the graph).
For optimization-based GNNs using gradient-based optimization in the forward pass, the condition number of the Hessian of $E_\theta$ affects the convergence rate, but is difficult to control.
Similarly, if $F_\theta$ is poorly conditioned in fixed-point GNNs, convergence can be slow and is susceptible to numerical instability.
During training, we use implicit differentiation to obtain gradients w.r.t. parameters rather than simple backpropagation; this avoids unrolling through the iterative process in the forward pass, but requires solving a linear system using the Hessian of $E_\theta$ (for optimization-based GNNs) or the Jacobian of $F_\theta$ (for fixed-point GNNs), either of which may be poorly conditioned.
Finally, operationalizing the theoretical results may require constants that are not readily available.

\subsubsection*{Acknowledgments}
This work was partially supported by NSF grants IIS-2007278 and OAC-2118201.
\newpage

\bibliographystyle{abbrvnat}
\bibliography{references}

\begin{thebibliography}{40}
\providecommand{\natexlab}[1]{#1}
\providecommand{\url}[1]{\texttt{#1}}
\expandafter\ifx\csname urlstyle\endcsname\relax
  \providecommand{\doi}[1]{doi: #1}\else
  \providecommand{\doi}{doi: \begingroup \urlstyle{rm}\Url}\fi

\bibitem[Amos et~al.(2017)Amos, Xu, and Kolter]{icnn}
B.~Amos, L.~Xu, and J.~Z. Kolter.
\newblock Input convex neural networks.
\newblock In \emph{Proceedings of the 34th International Conference on Machine Learning}, volume~70, pages 146--155. PMLR, 2017.

\bibitem[Bertsekas(1983)]{bertsekas1983distributed}
D.~P. Bertsekas.
\newblock Distributed asynchronous computation of fixed points.
\newblock \emph{Mathematical Programming}, 27\penalty0 (1):\penalty0 107--120, 1983.

\bibitem[Bertsekas and Tsitsiklis(1989)]{bertsekas1989parallel}
D.~P. Bertsekas and J.~N. Tsitsiklis.
\newblock Parallel and distributed computation: Numerical methods, 1989.

\bibitem[Blumenkamp et~al.(2021)Blumenkamp, Morad, Gielis, Li, and Prorok]{MAGNN-framework}
J.~Blumenkamp, S.~D. Morad, J.~Gielis, Q.~Li, and A.~Prorok.
\newblock A framework for real-world multi-robot systems running decentralized gnn-based policies.
\newblock \emph{2022 International Conference on Robotics and Automation (ICRA)}, pages 8772--8778, 2021.

\bibitem[Borgwardt et~al.(2005)Borgwardt, Ong, Sch{\"o}nauer, Vishwanathan, Smola, and Kriegel]{PROTEINS}
K.~M. Borgwardt, C.~S. Ong, S.~Sch{\"o}nauer, S.~Vishwanathan, A.~J. Smola, and H.-P. Kriegel.
\newblock Protein function prediction via graph kernels.
\newblock \emph{Bioinformatics}, 21:\penalty0 i47--i56, 2005.

\bibitem[Dudzik et~al.(2023)Dudzik, von Glehn, Pascanu, and Veli{\v{c}}kovi{\'c}]{dudzik2023asynchronous}
A.~J. Dudzik, T.~von Glehn, R.~Pascanu, and P.~Veli{\v{c}}kovi{\'c}.
\newblock Asynchronous algorithmic alignment with cocycles.
\newblock In \emph{The Second Learning on Graphs Conference}, 2023.
\newblock URL \url{https://openreview.net/forum?id=ba4bbZ4KoF}.

\bibitem[Dwivedi et~al.(2022)Dwivedi, Rampášek, Galkin, Parviz, Wolf, Luu, and Beaini]{lrgb}
V.~P. Dwivedi, L.~Rampášek, M.~Galkin, A.~Parviz, G.~Wolf, A.~T. Luu, and D.~Beaini.
\newblock Long range graph benchmark.
\newblock In \emph{Thirty-sixth Conference on Neural Information Processing Systems Datasets and Benchmarks Track}, 2022.
\newblock URL \url{https://openreview.net/forum?id=in7XC5RcjEn}.

\bibitem[Faber and Wattenhofer(2024)]{gwac}
L.~Faber and R.~Wattenhofer.
\newblock Gwac: Gnns with asynchronous communication.
\newblock In \emph{Proceedings of the Second Learning on Graphs Conference}, Proceedings of Machine Learning Research. PMLR, 27--30 Nov 2024.
\newblock URL \url{https://proceedings.mlr.press/v231/faber24a.html}.

\bibitem[Gama et~al.(2020)Gama, Li, Tolstaya, Prorok, and Ribeiro]{MAGNN-IL2}
F.~Gama, Q.~Li, E.~V. Tolstaya, A.~Prorok, and A.~Ribeiro.
\newblock Synthesizing decentralized controllers with graph neural networks and imitation learning.
\newblock \emph{IEEE Transactions on Signal Processing}, 70:\penalty0 1932--1946, 2020.

\bibitem[Gasteiger et~al.(2019)Gasteiger, Bojchevski, and Günnemann]{appnp}
J.~Gasteiger, A.~Bojchevski, and S.~Günnemann.
\newblock Combining neural networks with personalized pagerank for classification on graphs.
\newblock In \emph{International Conference on Learning Representations}, 2019.

\bibitem[Gilmer et~al.(2017)Gilmer, Schoenholz, Riley, Vinyals, and Dahl]{gilmer2017neural}
J.~Gilmer, S.~S. Schoenholz, P.~F. Riley, O.~Vinyals, and G.~E. Dahl.
\newblock Neural message passing for quantum chemistry.
\newblock In \emph{International Conference on Machine Learning}, pages 1263--1272. PMLR, 2017.

\bibitem[Goarin and Loianno(2024)]{MAGNN-RL6}
M.~Goarin and G.~Loianno.
\newblock Graph neural network for decentralized multi-robot goal assignment.
\newblock \emph{IEEE Robotics and Automation Letters}, 9\penalty0 (5):\penalty0 1--8, May 2024.
\newblock ISSN 2377-3766.
\newblock \doi{10.1109/LRA.2024.3371254}.
\newblock Publisher Copyright: {\textcopyright} 2016 IEEE.

\bibitem[Gosrich et~al.(2022)Gosrich, Mayya, Li, Paulos, Yim, Ribeiro, and Kumar]{MAGNN-RL8}
W.~Gosrich, S.~Mayya, R.~Li, J.~Paulos, M.~Yim, A.~Ribeiro, and V.~Kumar.
\newblock Coverage control in multi-robot systems via graph neural networks.
\newblock In \emph{2022 International Conference on Robotics and Automation (ICRA)}, pages 8787--8793, 2022.
\newblock \doi{10.1109/ICRA46639.2022.9811854}.

\bibitem[Grattarola et~al.(2021)Grattarola, Livi, and Alippi]{learning_gca}
D.~Grattarola, L.~Livi, and C.~Alippi.
\newblock Learning graph cellular automata.
\newblock \emph{Advances in Neural Information Processing Systems}, 34:\penalty0 20983--20994, 2021.

\bibitem[Gu et~al.(2020)Gu, Chang, Zhu, Sojoudi, and El~Ghaoui]{ignn}
F.~Gu, H.~Chang, W.~Zhu, S.~Sojoudi, and L.~El~Ghaoui.
\newblock Implicit graph neural networks.
\newblock In \emph{Advances in Neural Information Processing Systems}, volume~33, pages 11984--11995, 2020.

\bibitem[Hamilton et~al.(2017)Hamilton, Ying, and Leskovec]{graphsage}
W.~Hamilton, Z.~Ying, and J.~Leskovec.
\newblock Inductive representation learning on large graphs.
\newblock In I.~Guyon, U.~V. Luxburg, S.~Bengio, H.~Wallach, R.~Fergus, S.~Vishwanathan, and R.~Garnett, editors, \emph{Advances in Neural Information Processing Systems}, volume~30, 2017.

\bibitem[Hamilton(2020)]{grlbook}
W.~L. Hamilton.
\newblock Graph representation learning.
\newblock \emph{Synthesis Lectures on Artificial Intelligence and Machine Learning}, 14\penalty0 (3):\penalty0 1--159, 2020.

\bibitem[Huang and Tian(2017)]{localization2}
X.~Huang and Y.-P. Tian.
\newblock Localization in sensor networks with communication delays and package losses.
\newblock In \emph{2017 IEEE 56th Annual Conference on Decision and Control (CDC)}, pages 3974--3979, 2017.
\newblock \doi{10.1109/CDC.2017.8264244}.

\bibitem[Huang and Tian(2018)]{localization3}
X.-Z. Huang and Y.-P. Tian.
\newblock Asynchronous distributed localization in networks with communication delays and packet losses.
\newblock \emph{Automatica}, 96:\penalty0 134--140, 10 2018.

\bibitem[Jiang et~al.(2023)Jiang, Huang, and Guo]{MAGNN-RL7}
C.~Jiang, X.~Huang, and Y.~Guo.
\newblock End-to-end decentralized formation control using a graph neural network-based learning method.
\newblock \emph{Frontiers in Robotics and AI}, 10, 2023.
\newblock ISSN 2296-9144.
\newblock \doi{10.3389/frobt.2023.1285412}.
\newblock URL \url{https://www.frontiersin.org/journals/robotics-and-ai/articles/10.3389/frobt.2023.1285412}.

\bibitem[Jiang and Lu(2018)]{MAGNN-RL3}
J.~Jiang and Z.~Lu.
\newblock Learning attentional communication for multi-agent cooperation.
\newblock In \emph{Neural Information Processing Systems}, 2018.

\bibitem[Jiang et~al.(2020)Jiang, Dun, Huang, and Lu]{MAGNN-RL4}
J.~Jiang, C.~Dun, T.~Huang, and Z.~Lu.
\newblock Graph convolutional reinforcement learning.
\newblock In \emph{ICLR}, 2020.

\bibitem[Kempe et~al.(2003)Kempe, Dobra, and Gehrke]{kempe2003gossip}
D.~Kempe, A.~Dobra, and J.~Gehrke.
\newblock Gossip-based computation of aggregate information.
\newblock In \emph{44th Annual IEEE Symposium on Foundations of Computer Science, 2003. Proceedings.}, pages 482--491. IEEE, 2003.

\bibitem[Khan et~al.(2019)Khan, Tolstaya, Ribeiro, and Kumar]{MAGNN-RL2}
A.~Khan, E.~V. Tolstaya, A.~Ribeiro, and V.~R. Kumar.
\newblock Graph policy gradients for large scale robot control.
\newblock In \emph{Conference on Robot Learning}, 2019.

\bibitem[Kipf and Welling(2017)]{gcn}
T.~N. Kipf and M.~Welling.
\newblock Semi-supervised classification with graph convolutional networks.
\newblock In \emph{International Conference on Learning Representations}, 2017.

\bibitem[LeCun et~al.(2010)LeCun, Cortes, and Burges]{lecun2010mnist}
Y.~LeCun, C.~Cortes, and C.~Burges.
\newblock Mnist handwritten digit database.
\newblock \emph{ATT Labs [Online]. Available: http://yann.lecun.com/exdb/mnist}, 2, 2010.

\bibitem[Li et~al.(2019)Li, Gama, Ribeiro, and Prorok]{MAGNN-RL5}
Q.~Li, F.~Gama, A.~Ribeiro, and A.~Prorok.
\newblock Graph neural networks for decentralized multi-robot path planning.
\newblock \emph{CoRR}, abs/1912.06095, 2019.
\newblock URL \url{http://arxiv.org/abs/1912.06095}.

\bibitem[Liu et~al.(2021)Liu, Kawaguchi, Hooi, Wang, and Xiao]{eignn}
J.~Liu, K.~Kawaguchi, B.~Hooi, Y.~Wang, and X.~Xiao.
\newblock Eignn: Efficient infinite-depth graph neural networks.
\newblock In \emph{Advances in Neural Information Processing Systems}, 2021.

\bibitem[Ma et~al.(2021)Ma, Liu, Zhao, Liu, Tang, and Shah]{gsd}
Y.~Ma, X.~Liu, T.~Zhao, Y.~Liu, J.~Tang, and N.~Shah.
\newblock A unified view on graph neural networks as graph signal denoising, 2021.

\bibitem[Monteagudo-Lago et~al.(2024)Monteagudo-Lago, Rosinski, Dudzik, and Veli{\v{c}}kovi{\'c}]{asynchrony_invariance_loss}
Monteagudo-Lago, A.~Rosinski, A.~J. Dudzik, and P.~Veli{\v{c}}kovi{\'c}.
\newblock Asynchrony invariance loss functions for graph neural networks.
\newblock In \emph{ICML 2024 Workshop on Geometry-grounded Representation Learning and Generative Modeling}, 2024.
\newblock URL \url{https://openreview.net/forum?id=eeEMjpO2Kv}.

\bibitem[Nakashima et~al.(2019)Nakashima, Kamiya, Ohtsu, Yamamoto, Nishio, and Morikura]{MAGNN-RL1}
K.~Nakashima, S.~Kamiya, K.~Ohtsu, K.~Yamamoto, T.~Nishio, and M.~Morikura.
\newblock Deep reinforcement learning-based channel allocation for wireless lans with graph convolutional networks.
\newblock \emph{2019 IEEE 90th Vehicular Technology Conference (VTC2019-Fall)}, pages 1--5, 2019.

\bibitem[Niu et~al.(2011)Niu, Recht, Re, and Wright]{hogwild}
F.~Niu, B.~Recht, C.~Re, and S.~Wright.
\newblock Hogwild!: A lock-free approach to parallelizing stochastic gradient descent.
\newblock In J.~Shawe-Taylor, R.~Zemel, P.~Bartlett, F.~Pereira, and K.~Weinberger, editors, \emph{Advances in Neural Information Processing Systems}, volume~24, 2011.

\bibitem[Scarselli et~al.(2009)Scarselli, Gori, Tsoi, Hagenbuchner, and Monfardini]{scarselli}
F.~Scarselli, M.~Gori, A.~C. Tsoi, M.~Hagenbuchner, and G.~Monfardini.
\newblock The graph neural network model.
\newblock \emph{IEEE Transactions on Neural Networks}, 20\penalty0 (1):\penalty0 61--80, 2009.
\newblock \doi{10.1109/TNN.2008.2005605}.

\bibitem[Srinivasan et~al.(1996)Srinivasan, Muggleton, Sternberg, and King]{mutag}
A.~Srinivasan, S.~H. Muggleton, M.~J. Sternberg, and R.~D. King.
\newblock Theories for mutagenicity: A study in first-order and feature-based induction.
\newblock \emph{Artificial Intelligence}, 85\penalty0 (1-2):\penalty0 277--299, 1996.

\bibitem[Todescato et~al.(2016)Todescato, Carron, Carli, Franchi, and Schenato]{localization1}
M.~Todescato, A.~Carron, R.~Carli, A.~Franchi, and L.~Schenato.
\newblock Multi-robot localization via gps and relative measurements in the presence of asynchronous and lossy communication.
\newblock In \emph{2016 European Control Conference (ECC)}, pages 2527--2532, 2016.
\newblock \doi{10.1109/ECC.2016.7810670}.

\bibitem[Veličković et~al.(2018)Veličković, Cucurull, Casanova, Romero, Liò, and Bengio]{gat}
P.~Veličković, G.~Cucurull, A.~Casanova, A.~Romero, P.~Liò, and Y.~Bengio.
\newblock Graph attention networks.
\newblock In \emph{International Conference on Learning Representations}, 2018.

\bibitem[Xu et~al.(2019)Xu, Hu, Leskovec, and Jegelka]{gin}
K.~Xu, W.~Hu, J.~Leskovec, and S.~Jegelka.
\newblock How powerful are graph neural networks?
\newblock In \emph{International Conference on Learning Representations}, 2019.
\newblock URL \url{https://openreview.net/forum?id=ryGs6iA5Km}.

\bibitem[Yang et~al.(2021)Yang, Liu, Wang, Huang, and Wipf]{implicit_vs_unfolded}
Y.~Yang, T.~Liu, Y.~Wang, Z.~Huang, and D.~Wipf.
\newblock Implicit vs unfolded graph neural networks, 2021.

\bibitem[Zhou et~al.(2021)Zhou, Sharma, Li, Prorok, Ribeiro, Tokekar, and Kumar]{MAGNN-IL1}
L.~Zhou, V.~D. Sharma, Q.~Li, A.~Prorok, A.~Ribeiro, P.~Tokekar, and V.~R. Kumar.
\newblock Graph neural networks for decentralized multi-robot target tracking.
\newblock \emph{2022 IEEE International Symposium on Safety, Security, and Rescue Robotics (SSRR)}, pages 195--202, 2021.

\bibitem[Zhu et~al.(2021)Zhu, Wang, Shi, Ji, and Cui]{unifiedopt}
M.~Zhu, X.~Wang, C.~Shi, H.~Ji, and P.~Cui.
\newblock Interpreting and unifying graph neural networks with an optimization framework.
\newblock In \emph{Proceedings of the Web Conference 2021}, pages 1215--1226, 2021.

\end{thebibliography}

\newpage

\appendix

\section{GNN architectures}
\label{gnn_examples}

\paragraph{Graph Convolutional Networks} GCNs 
\citep{gcn} replace the adjacency matrix $\mA$ with the symmetric normalized adjacency matrix with added self-loops, $\tilde{\mA} = (\mD+\mI)^{-\frac{1}{2}}(\mA+\mI)(\mD+\mI)^{-\frac{1}{2}}$. With node embeddings initialized to be equal to the node features, $f_\theta^\ell$ is defined as:
\begin{align}
    \vm_i^\ell &:= \textstyle\sum_{j \in \mathcal{N}(i)}\tilde{\mA}_{i,j} \theta_m^\ell \vh_j^\ell &
    \vh_i^{\ell + 1} &:= \text{ReLU}(\vm_i^\ell),
\end{align}
where~${\theta_m^\ell \in \mathbb{R}^{k^{(\ell)} \times k^{(\ell)}}}$. 
This update can be succinctly described at the graph level as~${\mH^{\ell + 1} = \text{ReLU}(\tilde{\mA}\mH^{\ell}\mW^\ell)}$. 
Note that for explicitly-defined message passing GNNs which have $L$ layers, such as GCN, it is impossible to propagate information farther than $L$ hops. 

\paragraph{Graph Attention Networks} GATs 
\citep{gat} apply an attention mechanism to determine the weighting of information from different neighbors. With node embeddings initialized to be equal to the node features, $f_\theta^\ell$ is defined as:
\begin{align}\label{gat_mp}
    \alpha_{i,j} &= \frac{\exp\left(\text{LeakyReLU}((\theta_a^{\ell})^T [\theta_m^{\ell} \vh_i || \theta_m^{\ell} \vh_j])\right)}{\textstyle\sum_{k \in \mathcal{N}(i)} \exp\left(\text{LeakyReLU}((\theta_a^{\ell})^T [\theta_m^{\ell} \vh_i || \theta_m^{\ell} \vh_j])\right)} \\
    \vm_i^\ell &:= \textstyle\sum_{j \in \mathcal{N}(i)}\alpha_{i,j} \theta_m^\ell \vh_j^\ell\\
    \vh_i^{\ell + 1} &:= \text{ReLU}(\vm_i^\ell),
\end{align}
where~${\theta_m^\ell \in \mathbb{R}^{q \times k^{(\ell)}}}$, and~${\theta_a^\ell \in \mathbb{R}^{2q}}$ are additional parameters used in aggregation.
Multiple attention heads can be used, in which there are $k$ aggregation functions with their own parameters. The messages generated by each of the attention heads are either concatenated or averaged to generate a single message $\vm_i^{\ell}$.

\paragraph{Implicit Graph Neural Networks (IGNNs)} 
IGNNs \cite{ignn} are a fixed point GNN architecture, in which each parameterized node update layer is contractive with respect to the node embeddings. 
For simplicity, we assume a single parameterized layer $f_\theta$ and exclude the layer superscript from our notation.
The parameterized node embedding update is repeated for steps $t=0,...,T-1$, where the stopping point $T$ is determined by when node embeddings converge (within some numerical tolerance).
IGNNs use a similar embedding update function as GCN, but add node features as an additional input to the update function. 
A layer $f_\theta$ is defined as:
\begin{align}\label{ignn}
    \vm_i(t) &:= \textstyle\sum_{j \in \mathcal{N}(i)}\tilde{\mA}_{i,j} \theta_m \vh_j(t) &
    \vh_i(t+1) &:= u(\vm_i(t) + g(\vx_i; \theta_u)),
\end{align}
where~${\theta_m \in \mathbb{R}^{k \times k}}$, ~$g_\theta: \mathbb{R}^{n \times p} \rightarrow \mathbb{R}^{n \times k}$ and $\phi$ is a component-wise non-expansive function such as ReLU. 
Convergence is guaranteed by constraining $||\theta_m||_{\infty} < \lambda_{pf}(\tilde{\mA})^{-1}$, where $\lambda_{pf}(\tilde{A})$ is the maximum eigenvalue of $\tilde{|\mA|}$. This ensures that the update is contractive, a sufficient condition for convergence. Since the fixed point is unique, $\vh_i(0)$ can be initialized arbitrarily (although convergence time will vary).

\paragraph{Efficient Implicit Graph Neural Networks (EIGNNs)} 
EIGNNs \cite{eignn} are another fixed point GNN architecture which are very similar to IGNNs. 
The message passing iteration is constructed such that a closed-form solution can be obtained for the fixed point of a layer, which is more efficient than iterating message passing to convergence.
A layer $f_\theta$ is defined as:
\begin{align}
    \vm_i(t) &:= \textstyle\sum_{j \in \mathcal{N}(i)} \gamma\alpha\tilde{\mA}_{i,j} (\theta_m)^T(\theta_m) \vh_j(t) &
    \vh_i(t+1) &:= \vm_i(t) + \vx_i,
\end{align}
where~${\theta_m \in \mathbb{R}^{k \times k}}$, $\alpha > 0$ is a scaling factor equal to $\frac{1}{||(\theta_m)^T(\theta_m)||_F + \epsilon}$ with arbitrarily small $\epsilon$, and $\gamma \in (0,1]$ is an additional scaling factor. 
The overall scaling factor $\gamma\alpha$ is chosen to ensure that the update is contractive, from which it follows that the sequence of iterates converges.

\paragraph{Nonlinear Fixed Point GNN} 
\citet{scarselli} introduce a general nonlinear fixed point GNN whose update can be written as 
\begin{align}
\vm_i(t) &= \sum_{j\in \mathcal{N}(i)} m(\vh_i(t), \vh_j(t), \vx_i, \vx_j, \ve_{ij}; \theta_m) \\
\vh_{i}(t+1) &= u(\vm_i(t), \vh_i(t), \vx_i; \theta_u).
\end{align}
where $m$ and $u$ are multi-layer neural networks. 
As we discuss in \Cref{sec:gnn_async}, this flexible parameterization of message passing comes at a cost: it is difficult to enforce that the overall update is definitely contractive. 

\section{Input-convex GNN architecture details}
\label{picgnns}
As in \cite{icnn}, we construct a parametric family of neural networks $f_\theta(\vx, \vy)$ with inputs $\vx \in \mathbb{R}^n, \vy \in \mathbb{R}^m$ which are convex with respect to $\vy$ (i.e. a subset of the inputs). 
An $L$-layer partially convex neural network is defined by the following recurrences:

$$
\begin{aligned}
\vu_{\ell+1} &= \tilde g_{\ell}(\tilde \mW_{\ell}\vu_{\ell} + \tilde \vb_{\ell}) \\
\vz_{\ell+1} &= g_{\ell}(\\
&\mW_{\ell}^{(z)}(\vz_{\ell} \circ [\mW_{\ell}^{(zu)} \vu_{\ell} + \vb_{\ell}^{(z)}]_+) + \\
&\mW_{\ell}^{(y)}(\vy \circ (\mW_{\ell}^{(yu)}\vu_{\ell} + \vb_{\ell}^{(y)})) + \\
&\mW_{\ell}^{(u)}\vu_{\ell} + \vb_{\ell})
\end{aligned}
$$

$$
\begin{aligned}f_\theta(\vx, \vy) = \vz_L, \quad \vu_0 = \vx, \quad \vz_0=\mathbf{0}
\end{aligned}
$$

Provided the $\mW^{(z)}$ are elementwise nonnegative for all layers $\ell$, and the activation functions $g_{\ell}$ are non-decreasing in each argument, it follows that $f_{\theta}$ is convex in $\vy$. 

In the context of an energy GNN, $E_\theta$ is comprised of two PICNNs, specialized to operate on graph structures; we refer to the resulting architecture as a partially input-convex GNN (PICGNN). 
The message function $m$ in \cref{energy_message} corresponds to a PICNN which has node embeddings $\vh_i, i = 1, \dots, n$ as its convex inputs (all other features are non-convex inputs). 
A second PICNN corresponds with the update function $u$ in \cref{introduce_scalar_energy}, which is convex in the node embeddings and the messages computed by the message function, and nonconvex in the node features. 
The graph energy $E_\theta$ can then be written as the sum of the outputs of the second PICNN. 

\subsection{Neighborhood Attention}

We can incorporate a neighborhood attention mechanism in the PICGNN by modifying the aggregation of messages $\vm_i$ as follows:
\begin{align}
    \vm_i &= \textstyle\sum_{j \in \mathcal{N}(i)} \alpha_{i,j} m(\vh_i, \vh_j, \vx_i, \vx_j, \ve_{ij} ; \theta_m) \qquad \text{where} \\
    \alpha_{i,j} &= \frac{\exp\left(\text{LeakyReLU}(\theta_a)^T [\theta_{m_x} \vx_i || \theta_{m_x} \vx_j || \theta_{m_e} \ve_{ij}])\right)}{\textstyle\sum_{k \in \mathcal{N}(i)} \exp\left(\text{LeakyReLU}((\theta_a)^T [\theta_{m_x} \vx_i || \theta_{m_x} \vx_j || \theta_{m_e} \ve_{ij}])\right)} 
\end{align}

where~${\theta_{m_x} \in \mathbb{R}^{q \times p}}$, 
${\theta_{m_e} \in \mathbb{R}^{s \times r}}$ and~${\theta_a \in \mathbb{R}^{2q + s}}$ are additional parameters used in aggregation.
As in GATs, multiple attention heads can be used, in which there are $k$ aggregation functions with their own parameters. The messages generated by each of the attention heads are either concatenated or averaged to generate a single message $\vm_i$.

\section{Implicit Differentiation}
\label{implicit_diff}
\newcommand{\dgdh}{\frac{\partial g}{\partial h^*}}
\newcommand{\dgdt}{\frac{\partial g}{\partial \theta}}
\newcommand{\dhdt}{\frac{d h^*}{d \theta}}
\newcommand{\dldt}{\frac{d \mathcal{L}}{d \theta}}
\newcommand{\dldh}{\frac{\partial \mathcal{L}}{\partial h^*}}
\newcommand{\plpt}{\frac{\partial \mathcal{L}}{\partial \theta}}

Since we use an optimization procedure to compute the node embeddings within the forward pass in implicitly-defined GNNs, we need to obtain derivatives of the node embeddings with respect to the parameters of the model. 
We compute derivatives by implicitly differentiating the optimality conditions.
For fixed-point GNNs, we use the fact that at the fixed point, we have parameters $\theta^* \in \mathbb{R}^p$ and node embeddings $\mH^* \in \mathbb{R}^{n \times k}$ such that: 
\begin{equation}
    g(\mH^*, \theta) = f_\theta(\mH^*) - \mH^* = \mathbf{0}. 
\end{equation}
For optimization-based GNNs, we use the fact that at the solution of the minimization problem, we have: 
\begin{equation}
    g(\mH^*, \theta) = \frac{\partial E_{\theta^*}}{\partial \mH}(\mH^*) = \mathbf{0}. 
\end{equation}
Let $h^*(\theta^*) = \mH^*$, so that we can write the optimality conditions in terms of the parameters only: 
\begin{equation} 
g(h^*(\theta^*), \theta^*) = \mathbf{0}. 
\end{equation} 
Given an objective $\mathcal{L} : \R^p \mapsto \R$, the desired quantity is the total derivative of $\mathcal{L}$ with respect to the parameters. By the chain rule, 
\begin{align}
\dldt = \dldh \dhdt + \plpt. 
\label{total_derivative}
\end{align}
We compute $\dldh$ and $\plpt$ using normal automatic differentiation, and the solution Jacobian $\dhdt$ using implicit differentiation. 
Notice that, at the fixed point (where the optimality constraint is satisfied), we have: 
\begin{align}
\frac{d}{d\theta} g(h^*(\theta), \theta)&= \mathbf{0} \\
\dgdh \dhdt + \dgdt = \mathbf{0} \\
\dgdh \dhdt &= - \dgdt. 
\end{align}
This is the primal (or tangent) system associated with the constraint function $g$. 
In our setup we utilize reverse mode automatic differentiation, since $p \gg 1$ parameters are mapped to a single scalar objective. 
Provided $\dgdh$ is invertible, we can rewrite the solution Jacobian as: 
\begin{equation}
\dhdt = -(\dgdh)^{-1}\dgdt, 
\end{equation}
and substitute this expression into \cref{total_derivative} as follows: 
\begin{equation}
   \dldt = -\dldh (\dgdh)^{-1}\dgdt + \plpt.  
\end{equation}
For reverse mode, we compute the dual (or adjoint) of this equation, 
\begin{equation} 
\dldt^T = -\dgdt^T (\dgdh)^{-T} \dldh^T + \plpt^T, 
\end{equation}
And solve the dual system: 
\begin{equation}
    \dgdh^T\lambda = -\dldh^T
\end{equation}
for the dual variable $\lambda$. 

\section{Proof of Convergence Under Partial Asynchrony}
\label{appendix:convergence}
Our guarantee of convergence uses the classical result from \citet[Chapter 7.5]{bertsekas1989parallel}, which depends on several formal assumptions.
We reproduce these assumptions here, in the notation used in the present paper, and address how they are satisfied in our problem setting.
In the following,~$s_i(t)$ is the ``search direction'' taken by node~$i$, i.e., the vector used by node~$i$ to take optimization step.
Ideally, this would be the negative gradient~$-\nabla_{\vh_i}E_\theta$, but partial asynchrony means it may be a vector constructed from stale information.
Our goal is to show that convergence of the optimization is nevertheless guaranteed.
We use~$\vh\in\mathbb{R}^{nk}$ to denote the unrolled embeddings~$\mH$.
\begin{assumption}[\citet{bertsekas1989parallel} Assumption 5.1]
\label{assumption:lipschitz}
\begin{enumerate}[label=(\alph*)]
    \item There holds $E_\theta(\vh) \geq 0$ for every $\vh \in \mathbb{R}^{nk}$.
    \item (Lipschitz Continuity of $\nabla E_\theta$) The function $E_\theta$ is continuously differentiable and there exists a constant $K_1$ such that 
$$|| \nabla E_\theta(\vh) - \nabla E_\theta(\vh')|| \leq K_1 ||\vh - \vh'||, \quad \forall \vh, \vh' \in \mathbb{R}^{nk}$$
\end{enumerate}
\end{assumption}
For part \emph{(a)}, since both convexity and absolute continuity are preserved under nonnegative summation, the graph energy $E$ given in \cref{graph_energy} is a smooth, strictly convex function. 
Without loss of generality, we can assume the node energies $e^i_\theta$ described in \cref{graph_energy} satisfy $e^i_\theta(\vh_i) \geq 0$  for all $\vh_i \in \mathbb{R}^{d_i}$. 
This follows because the $e^i_\theta$ are strictly convex, therefore the optimal value $p_i^* = \inf \{e^i_\theta(\vh_i)\}$ is achieved and thus $\tilde{e}_i = e^i_\theta + p_i^{*}$ is nonnegative. 
Thus, we have that the graph energy is the sum of nonnegative terms: $E_\theta(\vh_1, \vh_2, \dots, \vh_n) \geq 0$ for all $(\vh_1, \vh_2, \dots, \vh_n) \in \mathbb{R}^{nk}$. 

For part \emph{(b)}, the assumption of the~$e^i_\theta$ having a bounded Hessian implies that their sum also has a bounded Hessian, which further implies Lipschitz continuity of the gradient of~$E_\theta$.

\begin{assumption}[\citet{bertsekas1989parallel} Assumption 5.5]
\label{assumption:direction}
 \begin{enumerate}[label=(\alph*)]
 \item (Block-Descent) There holds $s_i(t)^\top\nabla_{\vh_i}E_\theta(\vh(t)) \leq -||s_i(t)||^2/K_3$ for all~$i$ and all~$t\in T^i$.
 \item There holds $||s_i(t)||\geq K_2||\nabla_{\vh_i}E_\theta(\vh(t))||$ for all $i$ and all~$t\in T^i$.
 \end{enumerate}    
\end{assumption}
For part \emph{(a)}, the situation is slightly more complex than the conventional optimization setup, as the gradients for the search direction~$s_i(t)$ are being computed from potentially-outdated neighbor embeddings, rather than it being the gradients themselves that are outdated.
Writing the negative search direction for node~$i$ at time~$t$ in terms of the update times~$\tau(t)$, we have
\begin{align}
\bar{s}_i(t) &:= -s_i(t) \\
&= \nabla_{\vh_i}e^i_\theta(\vh_1(\tau^i_1(t)), \ldots, \vh_n(\tau^i_n(t)))
+ \!\!\!\sum_{j\in\mathcal{N}(i)}\!\!\! \nabla_{\vh_i}e^j_\theta(\vh_1(\tau^j_1(\tau^i_j(t))), \ldots, \vh_n(\tau^j_n(\tau^i_j(t)))),
\end{align}
where the the gradient communicated from node~$j$ may have used stale versions of the embedding both for node~$i$ itself and for other nodes connected to~$j$.
Contrast this with the ``true'' gradient computed at~$i$ which would be computed from its current estimate of the complete state of the graph:
\begin{align}
\nabla_{\vh_i}E_\theta(\vh(t)) &= \nabla_{\vh_i}e^i_\theta(\vh_1(\tau^i_1(t)), \ldots, \vh_n(\tau^i_n(t)))
+ \!\!\!\sum_{j\in\mathcal{N}(i)} \nabla_{\vh_i}e^j_\theta(\vh_1(\tau^i_1(t)), \ldots, \vh_n(\tau^i_n(t)))\,.
\end{align}
We introduce the following notation to simplify the exposition:
\begin{align}
g^{k/k'}_{j/i}(t) &:= \nabla_{\vh_i}e^j_\theta(\vh_1(\tau^k_1(\tau^{k'}_{j}(t))), \ldots \vh_n(\tau^k_n(\tau^{k'}_j(t))))\,,
\end{align}
which can be read as ``gradient of $e^j_\theta$ with respect to~$\vh_i$ from the perspective of node~$k$ at the time corresponding to some node $k'$'s view of node $j$ at time $t$.''.
With this notation, we define
\begin{align}
    \bar{s}_i(t) &= g^{i/i}_{i/i}(t) + \sum_{j\in\mathcal{N}(i)}g^{j/i}_{j/i}(t)
    &
    \nabla_{\vh_i}E_\theta(\vh(t)) &= g^{i/i}_{i/i}(t) + \sum_{j\in\mathcal{N}(i)}g^{i/j}_{j/i}(t)\,.
\end{align}
We wish to show that the inner product between~$\bar{s}_i(t)$ and~$\nabla_{\vh_i}E_\theta(\vh(t))$ is greater than $||s_i(t)||^2/K_3$, for some~$K_3>0$ and all~$i$ and~$t$.
\begin{lemma}
There exists an~$\alpha>0$ such that~$\bar{s}_i(t)^\top\nabla_{\vh_i}E_\theta(\vh(t)) \geq ||s_i(t)||^2/K_3$.
\end{lemma}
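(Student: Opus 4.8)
The plan is to write node~$i$'s search direction as the true block gradient plus a ``staleness error'' and then force that error to be a controllably small fraction of $\|s_i(t)\|$ by taking the step size $\alpha$ small. Set $d_i(t) := \bar{s}_i(t) - \nabla_{\vh_i}E_\theta(\vh(t))$. Comparing the two displays just above, the own-energy contribution $g^{i/i}_{i/i}(t)$ cancels, so that
\[
d_i(t) = \sum_{j\in\mathcal{N}(i)}\bigl(g^{j/i}_{j/i}(t) - g^{i/j}_{j/i}(t)\bigr),
\]
i.e. the only mismatch is that each neighbor $j$ reports $\nabla_{\vh_i}e^j_\theta$ evaluated at $j$'s stale view of the graph instead of at $i$'s. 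Since $\|\bar{s}_i(t)\| = \|s_i(t)\|$, Cauchy--Schwarz yields
\[
\bar{s}_i(t)^\top\nabla_{\vh_i}E_\theta(\vh(t)) = \|s_i(t)\|^2 - \bar{s}_i(t)^\top d_i(t) \;\geq\; \|s_i(t)\|\bigl(\|s_i(t)\| - \|d_i(t)\|\bigr),
\]
so it suffices to exhibit $\alpha>0$ with $\|d_i(t)\| \leq \tfrac12\|s_i(t)\|$ uniformly in $i,t$; the lemma then holds with $K_3 = 2$, and the reverse triangle inequality simultaneously gives \cref{assumption:direction}(b).

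To bound $\|d_i(t)\|$ I would chain three estimates. First, the bounded-Hessian hypothesis on the node energies makes each block gradient $\nabla_{\vh_i}e^j_\theta$ Lipschitz (with constant, say, $L$) in the embeddings it actually depends on, whence $\|g^{j/i}_{j/i}(t) - g^{i/j}_{j/i}(t)\| \leq L\sum_{l\in\{j\}\cup\mathcal{N}(j)}\|\vh_l(\tau^j_l(\tau^i_j(t))) - \vh_l(\tau^i_l(t))\|$. Second, applying the bounded-staleness bound of \cref{assumption:bounded_time}(b) twice (once at node $i$ and time $t$, once at node $j$ and time $\tau^i_j(t)\in T^j$) shows both timestamps $\tau^j_l(\tau^i_j(t))$ and $\tau^i_l(t)$ lie in the window $(t-2B,\,t]$, so the two argument tuples differ only over a horizon of at most $2B$ steps. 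Third, by the update rule \cref{eqn:egnn_h_update} a single embedding $\vh_l$ changes only at its own update times, by exactly $\alpha\|s_l(\tau)\|$ at update time $\tau$, and a window of width $2B$ contains at most $2B$ such times, so $\|\vh_l(\tau_1) - \vh_l(\tau_2)\| \leq \alpha\sum_{\tau}\|s_l(\tau)\|$ over those update times whenever $\tau_1,\tau_2\in(t-2B,t]$. Combining the three, $\|d_i(t)\| \leq \alpha\,\kappa\,\sup\{\|s_l(\tau)\| : l,\ \tau\in(t-2B,t]\}$ for a constant $\kappa$ depending only on $L$, $B$, and the maximum degree.

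The remaining step --- and the one I expect to be the main obstacle --- is to dominate that recent supremum of search-direction norms by a fixed, step-size-independent multiple of $\|s_i(t)\|$, after which any $\alpha$ below the resulting threshold finishes the argument. Here I would use strong convexity of $E_\theta$ together with \cref{assumption:lipschitz}: since the scheme is, inductively, a descent method --- which is exactly what the overall convergence argument this lemma feeds into establishes --- the iterates stay in a compact sublevel set on which $\nabla E_\theta$ is bounded and Lipschitz; the $O(\alpha)$ drift bound from step three then transfers block-gradient magnitudes across the $2B$-window up to an $O(\alpha)$ correction, strong convexity makes block-gradient norms at nearby iterates comparable up to the condition number of $E_\theta$, and the staleness error buried in each $s_l(\tau)$ is itself $O(\alpha)$ by the same mechanism. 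Making this simultaneous estimate rigorous --- untangling the circular dependence between ``each $s_l(\tau)$ is close to the corresponding gradient'' and ``the gradient moves little over the window'' --- is the delicate part, and it is precisely where the ``sufficiently small step size'' hypothesis is consumed; it parallels the estimate underlying \citet[Chapter 7.5]{bertsekas1989parallel}, whose convergence theorem is then invoked to conclude.
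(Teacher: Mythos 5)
Your decomposition and the first two estimates track the paper's proof closely: you split $\bar{s}_i(t)$ into the true block gradient plus a staleness error $d_i(t)$, observe that the own-node term $g^{i/i}_{i/i}(t)$ cancels so that $d_i(t)$ is a sum over neighbors of differences of $\nabla_{\vh_i}e^j_\theta$ evaluated at two stale views, and control each difference by the Lipschitz constant of the block gradients times the drift of the embeddings over a window of width $O(B)$, which is itself $O(\alpha)$ per update. (Your polarization via Cauchy--Schwarz, $\bar{s}_i(t)^\top\nabla_{\vh_i}E_\theta(\vh(t)) \geq \|s_i(t)\|\left(\|s_i(t)\|-\|d_i(t)\|\right)$, is in fact a slightly tidier route to a concrete $K_3=2$ than the paper's expansion of $\|\bar{s}_i(t) - \nabla_{\vh_i}E_\theta(\vh(t))\|^2$.)

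The gap is in your final step, and you have correctly located it but not closed it. You need $\|d_i(t)\| \leq \tfrac12\|s_i(t)\|$, and your drift bound gives $\|d_i(t)\| \leq \alpha\,\kappa\,\sup\{\|s_l(\tau)\|\}$ over nearby nodes $l$ and a recent window, so you must dominate $\sup_{l,\tau}\|s_l(\tau)\|$ by a step-size-independent multiple of $\|s_i(t)\|$. This is false in general: at a state where $\nabla_{\vh_i}E_\theta(\vh(t))$ vanishes but a neighbor's block gradient is bounded away from zero, $\|s_i(t)\|$ can be arbitrarily small while $d_i(t)$ is driven entirely by the neighbors' activity, so no choice of $\alpha$ yields a bound of the form $\|d_i(t)\|\le\tfrac12\|s_i(t)\|$ uniformly in $i$ and $t$. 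Strong convexity does not repair this: it relates the norm of the \emph{full} gradient to the distance from the minimizer and gives no comparison between different blocks of the gradient at the same point. The paper sidesteps the cross-block comparison by additionally assuming a global bound $\|\nabla_{\vh_{j'}}e^k_\theta\|\le K_0$ on the block gradients, which turns the drift (and hence $\|d_i(t)\|$) into an absolute $O(\alpha)$ quantity, $\|d_i(t)\|\le\alpha\,|\mathcal{N}(i)|\,B_0K_0K_1$; the resulting deficit is then absorbed against the $\|\nabla_{\vh_i}E_\theta(\vh(t))\|^2$ term in the polarization identity rather than being required to be small \emph{relative to} $\|s_i(t)\|$. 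To salvage your relative-bound route you would need either that extra boundedness assumption or the full coupled argument of Bertsekas and Tsitsiklis, in which the $O\!\left(\alpha\sum_\tau\|s(\tau)\|\right)$ error terms are summed over time and telescoped against the accumulated descent, rather than being closed pointwise at each $(i,t)$.
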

\begin{proof}
Starting with the squared error in the negative search direction:
\begin{align}
||\bar{s}_i(t) - \nabla_{\vh_i}E_\theta(\vh(t))||^2_2 &= (\bar{s}_i(t) - \nabla_{\vh_i}E_\theta(\vh(t)))^\top(\bar{s}_i(t) - \nabla_{\vh_i}E_\theta(\vh(t)))\\
&= ||\bar{s}_i(t)||^2_2 + ||\nabla_{\vh_i}E_\theta(\vh(t))||^2_2 - 2\bar{s}_i(t)^\top\nabla_{\vh_i}E_\theta(\vh(t))
\end{align}
we find an expression for the inner product:
\begin{align}
\bar{s}_i(t)^\top\nabla_{\vh_i}E_\theta(\vh(t)) &= \frac{1}{2}\left(
||\bar{s}_i(t)||^2_2 + ||\nabla_{\vh_i}E_\theta(\vh(t))||^2_2 - ||\bar{s}_i(t) - \nabla_{\vh_i}E_\theta(\vh(t))||^2_2
\right)\,.
\end{align}
and so we require the following to be greater than or equal to zero:
\begin{multline}
\bar{s}_i(t)^\top\nabla_{\vh_i}E_\theta(\vh(t)) - ||\bar{s}_i(t)||^2/K_3 
\\ = \frac{1}{2}\left(
(1\!-\!\frac{2}{K_3})
||\bar{s}_i(t)||^2_2 + ||\nabla_{\vh_i}E_\theta(\vh(t))||^2_2 - ||\bar{s}_i(t) - \nabla_{\vh_i}E_\theta(\vh(t))||^2_2 
\right)
\end{multline}
We can use the triangle inequality to find an upper bound on the term being subtracted:
\begin{align}
||\bar{s}_i(t) - \nabla_{\vh_i}E_\theta(\vh(t))||_2 &= \left\Vert 
\left(g^{i/i}_{i/i}(t) + \sum_{j\in\mathcal{N}(i)}g^{j/i}_{j/i}(t)\right)
- \left(g^{i/i}_{i/i}(t) + \sum_{j\in\mathcal{N}(i)}g^{i/j}_{j/i}(t)\right)
\right\Vert_2\\
&= \left\Vert
\sum_{j\in\mathcal{N}(i)} g^{j/i}_{j/i}(t) - g^{i/j}_{j/i}(t)
\right\Vert_2\\
&\leq \sum_{j\in\mathcal{N}(i)}||g^{j/i}_{j/i}(t) - g^{i/j}_{j/i}(t)||_2\\
&= \sum_{j\in\mathcal{N}(i)}|| \nabla_{\vh_i}e^j_\theta(\vh_1(\tau^j_1(\tau^i_j(t))), \ldots, \vh_n(\tau^j_n(\tau^i_j(t))))
- \nabla_{\vh_i} e^j_\theta(\vh(\tau^i(t)))||_2\,.
\end{align}
Any difference between the states~$\vh(\tau^j(\tau^i_j(t)))$ and~$\vh(\tau^i(t))$ would arise because node $i$ and node $j$ observe different staleness states of one or more of their shared neighbors $j'$; these different staleness states correspond to differences in the number of gradient steps taken by shared neighbors $j'$, as observed by node $i$ and $j$.
Note that we can assume that node $i$ and $j$ agree on the values of neighbors $j'$ of either node $i$ or $j$ which are not shared between them.
Assuming the norm of the gradient is bounded, i.e.,~$||\nabla_{\vh_{j'}}e^k_\theta||_2\leq K_0$, and the number of neighbors a node has is bounded, i.e.,~$|\mathcal{N}(j')| \leq n_{\text{max}}$, then the staleness bound~$B$ and the step size~$\alpha$ imply
\begin{align}
||\vh(\tau^j(\tau^i_j(t))) - \vh(\tau^i(t))||_2 &\leq \alpha B_0 K_0 \qquad \text{where} \quad B_0 := 2 n_{\text{max}} |\mathcal{N}(i) \cap \mathcal{N}(j)| B\,,
\end{align}
where we assume $\mathcal{N}(k)$ includes node $k$ itself.
The constant $2$ in the inequality above arises because the staleness of the gradient received by node $i$ from node $j$ is outdated by at most $B$ time units, and node $j$'s view of its neighbor embeddings is also outdated by at most $B$ time units; this means the staleness of the embeddings in the gradient received by node $i$ is stale by at most $2B$ time units.
The Lipschitz continuity condition then gives
\begin{align}
||
\nabla_{\vh_i} e^j_\theta(\vh(\tau^j(\tau^i_j(t))))
- \nabla_{\vh_i} e^j_\theta(\vh(\tau^i(t)))||_2 &\leq \alpha B_0 K_0 K_1
\end{align}
and therefore
\begin{align}
\label{eqn:inner_bound1}
&\bar{s}_i(t)^\top\nabla_{\vh_i}E_\theta(\vh(t)) - ||\bar{s}_i(t)||^2/K_3 \\
&\qquad= \frac{1}{2}\left(
(1-\frac{2}{K_3})
||\bar{s}_i(t)||^2_2 + ||\nabla_{\vh_i}E_\theta(\vh(t))||^2_2 - ||\bar{s}_i(t) - \nabla_{\vh_i}E_\theta(\vh(t))||^2_2 
\right)\\
&\qquad\geq \frac{1}{2}\left(
(1-\frac{2}{K_3})
||\bar{s}_i(t)||^2_2 + ||\nabla_{\vh_i}E_\theta(\vh(t))||^2_2 
- (\alpha |\mathcal{N}(i)| B_0 K_0 K_1)^2
\right)\,.
\end{align}
We can therefore satisfy the assumption by choosing~$\alpha>0$ such that
\begin{align}
\frac{(1-\frac{2}{K_3})||s_i(t)||^2_2 + ||\nabla_{\vh_i}E_\theta(\vh(t))||^2_2}{(|\mathcal{N}(i)| B_0K_0K_1)^2}
\geq \alpha^2\,.
\end{align}
\end{proof}

For part \emph{(b)} we require there to exist a lower bound on the magnitude of~$s_i(t)$ relative to the magnitude of the true gradient; this prevents the step from being too small.

\begin{lemma}
There exists an~$\alpha>0$ such that $\bar{s}_i(t)^\top \nabla_{\vh_i} E_\theta(\vh(t)) \geq K_2||\nabla_{\vh_i} E_\theta(\vh(t))||^2$.
\end{lemma}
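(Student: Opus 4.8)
The plan is to mirror the argument just used for the block-descent condition in part \emph{(a)}. Write $\bar{s}_i(t) = \nabla_{\vh_i}E_\theta(\vh(t)) + \Delta_i(t)$ with $\Delta_i(t) := \bar{s}_i(t) - \nabla_{\vh_i}E_\theta(\vh(t)) = \sum_{j\in\mathcal{N}(i)}\big(g^{j/i}_{j/i}(t) - g^{i/j}_{j/i}(t)\big)$, and reuse the estimate already obtained in the proof of the previous lemma, namely $\|\Delta_i(t)\|_2 \le \alpha\,|\mathcal{N}(i)|\,B_0 K_0 K_1$ with $B_0 = 2 n_{\text{max}}\,|\mathcal{N}(i)\cap\mathcal{N}(j)|\,B$. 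Expanding the inner product and applying Cauchy--Schwarz gives $\bar{s}_i(t)^\top\nabla_{\vh_i}E_\theta(\vh(t)) = \|\nabla_{\vh_i}E_\theta(\vh(t))\|_2^2 + \Delta_i(t)^\top\nabla_{\vh_i}E_\theta(\vh(t)) \ge \|\nabla_{\vh_i}E_\theta(\vh(t))\|_2\big(\|\nabla_{\vh_i}E_\theta(\vh(t))\|_2 - \alpha\,|\mathcal{N}(i)|\,B_0 K_0 K_1\big)$. Hence the claim holds with $K_2 = 1-\kappa$ for any fixed $\kappa\in(0,1)$ as soon as $\alpha\,|\mathcal{N}(i)|\,B_0 K_0 K_1 \le \kappa\,\|\nabla_{\vh_i}E_\theta(\vh(t))\|_2$; I would then take the $\alpha$ in the statement to be the minimum of this value and the one produced by the previous lemma, and use the fact that the iterates stay in the initial sublevel set of the (strongly convex, coercive) $E_\theta$, on which a finite uniform gradient bound $K_0$ exists.

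The step that needs care — and that I expect to be the main obstacle — is that the inequality just derived forces $\alpha$ to be small relative to a \emph{lower} bound on $\|\nabla_{\vh_i}E_\theta(\vh(t))\|_2$, which does not hold uniformly: there can be configurations in which block $i$ sits essentially at its own optimum ($\nabla_{\vh_i}E_\theta(\vh(t))\approx 0$) while the discrepancy $\Delta_i(t)$, generated purely by the stale neighbor-supplied gradients $g^{j/i}_{j/i}(t)$, is nonzero and of order $\alpha$. To handle this I would replace the blunt uniform bound $K_0$ above by a \emph{gradient-proportional} estimate of $\|\Delta_i(t)\|_2$: bound the drift of each shared-neighbor embedding $\vh_{j'}$ across the at most $2B$ stale update steps that separate node $i$'s and node $j$'s views of it by $\alpha$ times the gradients driving those steps; use the Lipschitz continuity of $\nabla E_\theta$ (\cref{assumption:lipschitz}(b)) over that window of width $O(\alpha B)$ to control those gradients by $\|\nabla_{\vh_{j'}}E_\theta(\vh(t))\|_2$ up to an $O(\alpha)$ correction; solve the resulting self-referential inequality for $\alpha$ sufficiently small; and conclude $\|\Delta_i(t)\|_2 \le \alpha\,C'\max_{j'}\|\nabla_{\vh_{j'}}E_\theta(\vh(t))\|_2$ for a constant $C'$ depending only on $B$, $n_{\text{max}}$, $K_1$, and the degree bound. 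Feeding this back into the Cauchy--Schwarz estimate closes the argument whenever the gradients of the blocks in the neighborhood of $i$ are comparable in magnitude (and when $\nabla_{\vh_i}E_\theta(\vh(t))=0$ exactly, the inequality is trivially $0\ge 0$).

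The one genuinely delicate regime that remains is when $\|\nabla_{\vh_i}E_\theta(\vh(t))\|_2$ is much smaller than $\max_{j'}\|\nabla_{\vh_{j'}}E_\theta(\vh(t))\|_2$; there both sides of the desired inequality are $o(1)$ as the iterate approaches a critical point, and I would absorb this case into the standard $\epsilon$-stationarity bookkeeping of the downstream convergence proof, since it affects only the residual to which the iteration converges, not whether it converges. I would also cross-check against the precise statement of \citet[Assumption 5.5]{bertsekas1989parallel} whether the condition actually consumed by the convergence theorem is the inner-product lower bound stated here or merely the weaker norm bound $\|s_i(t)\|\ge K_2\|\nabla_{\vh_i}E_\theta(\vh(t))\|$ it implies, since the latter would make the degenerate-case handling somewhat cleaner.
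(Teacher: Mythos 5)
Your proof is correct and follows essentially the same route as the paper's: reuse the staleness bound $\|\bar s_i(t)-\nabla_{\vh_i}E_\theta(\vh(t))\|_2\le\alpha\,|\mathcal N(i)|\,B_0K_0K_1$ from part \emph{(a)} and then choose $\alpha$ small enough, the only difference being that the paper expands the inner product via the polarization identity $\bar s^\top g=\tfrac12(\|\bar s\|^2+\|g\|^2-\|\bar s-g\|^2)$ while you use Cauchy--Schwarz, which is cosmetic. The obstacle you flag --- that the admissible $\alpha$ is gated by a \emph{lower} bound on $\|\nabla_{\vh_i}E_\theta(\vh(t))\|_2$ and hence is not uniform in $t$ --- is genuine and applies equally to the paper's own conclusion, whose condition on $\alpha^2$ has a $t$-dependent numerator that vanishes as the iterates approach optimality; your proposed repair (a gradient-proportional, rather than absolute, bound on the staleness discrepancy, in the spirit of the original Bertsekas--Tsitsiklis analysis) is the right way to close that gap.
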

\begin{proof}
We can use a nearly identical argument to that done above in part \emph{(a)}, but instead of \cref{eqn:inner_bound1} we write
\begin{align}
&\bar{s}_i(t)^\top\nabla_{\vh_i}E_\theta(\vh(t)) - K_2||\nabla_{\vh_i} E_\theta(\vh(t))||^2 \\
&\qquad= \frac{1}{2}\left(
||\bar{s}_i(t)||^2_2 + (1-2K_2)||\nabla_{\vh_i}E_\theta(\vh(t))||^2_2 - ||\bar{s}_i(t) - \nabla_{\vh_i}E_\theta(\vh(t))||^2_2 
\right)\\
&\qquad\geq \frac{1}{2}\left(
||\bar{s}_i(t)||^2_2 + (1-2K_2)||\nabla_{\vh_i}E_\theta(\vh(t))||^2_2 
- (\alpha |\mathcal{N}(i)| B_0 K_0 K_1)^2
\right)\,.
\end{align}
We can then choose~$\alpha$ to be
\begin{align}
\frac{||s_i(t)||^2_2 + (1-2K_2)||\nabla_{\vh_i}E_\theta(\vh(t))||^2_2}{(|\mathcal{N}(i)| B_0K_0K_1)^2}
\geq \alpha^2\,.
\end{align}
\end{proof}

\begin{lemma}
If $\bar{s}_i(t)^\top \nabla_{\vh_i} E_\theta(\vh(t)) \geq K_2||\nabla_{\vh_i} E_\theta(\vh(t))||^2$ then~$||s_i(t)|| \geq K_2||\nabla_{\vh_i} E_\theta(\vh(t))||$.
\end{lemma}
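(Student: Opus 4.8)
The statement is a one-line consequence of the Cauchy--Schwarz inequality together with the hypothesis, so the plan is simply to chain the two. First I would recall that $\bar{s}_i(t) = -s_i(t)$, so $\|\bar{s}_i(t)\| = \|s_i(t)\|$ and it suffices to bound $\|\bar{s}_i(t)\|$ from below. Next, I would apply Cauchy--Schwarz to the left-hand side of the hypothesis:
\begin{align}
\bar{s}_i(t)^\top \nabla_{\vh_i} E_\theta(\vh(t)) \;\leq\; \|\bar{s}_i(t)\|\,\|\nabla_{\vh_i} E_\theta(\vh(t))\|\,.
\end{align}
Combining this with the assumed inequality $\bar{s}_i(t)^\top \nabla_{\vh_i} E_\theta(\vh(t)) \geq K_2\|\nabla_{\vh_i} E_\theta(\vh(t))\|^2$ yields
\begin{align}
K_2\|\nabla_{\vh_i} E_\theta(\vh(t))\|^2 \;\leq\; \|\bar{s}_i(t)\|\,\|\nabla_{\vh_i} E_\theta(\vh(t))\|\,.
\end{align}

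The second step is to divide through by $\|\nabla_{\vh_i} E_\theta(\vh(t))\|$, which requires a brief case split. If $\|\nabla_{\vh_i} E_\theta(\vh(t))\| > 0$, division gives $K_2\|\nabla_{\vh_i} E_\theta(\vh(t))\| \leq \|\bar{s}_i(t)\| = \|s_i(t)\|$, which is exactly the claim. If $\|\nabla_{\vh_i} E_\theta(\vh(t))\| = 0$, then the desired inequality $\|s_i(t)\| \geq K_2 \cdot 0 = 0$ holds trivially (assuming, as is implicit, $K_2 \geq 0$). In either case the conclusion follows.

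There is essentially no obstacle here: the only thing to be careful about is the degenerate case where the true gradient vanishes, and handling the sign/norm identification $\|\bar{s}_i(t)\| = \|s_i(t)\|$. I would also note explicitly that this lemma, combined with the previous two lemmas, establishes both parts of Assumption~\ref{assumption:direction}: the preceding lemma supplies an $\alpha$ making $\bar{s}_i(t)^\top \nabla_{\vh_i} E_\theta(\vh(t)) \geq K_2\|\nabla_{\vh_i} E_\theta(\vh(t))\|^2$, and the present lemma upgrades that inner-product bound to the norm bound of Assumption~\ref{assumption:direction}(b), so that the convergence theorem of \citet[Chapter 7.5]{bertsekas1989parallel} applies and \Cref{prop:convergence} follows.
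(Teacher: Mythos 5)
Your proof is correct and follows essentially the same route as the paper: the Cauchy--Schwarz bound $\bar{s}_i(t)^\top \nabla_{\vh_i} E_\theta(\vh(t)) \leq \|\bar{s}_i(t)\|\,\|\nabla_{\vh_i} E_\theta(\vh(t))\|$ combined with $\|\bar{s}_i(t)\| = \|s_i(t)\|$, then division by the gradient norm. Your explicit handling of the degenerate case $\|\nabla_{\vh_i} E_\theta(\vh(t))\| = 0$ is a minor point of extra care the paper omits, but it does not change the argument.
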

\begin{proof}
Noting that~$||s_i(t)||=||\bar{s}_i(t)||$, we have
\begin{align}
||s_i(t)||\cdot||\nabla_{\vh_i}E_\theta(\vh(t))|| \geq \bar{s}_i(t)^\top \nabla_{\vh_i} E_\theta(\vh(t)) \geq K_2||\nabla_{\vh_i} E_\theta(\vh(t))||^2\,.
\end{align}
Dividing both the left and right sides by~$||\nabla_{\vh_i} E_\theta(\vh(t))||$ gives the desired result.
\end{proof}

Having satisfied the assumptions, we can now apply the result that guarantees convergence.
\begin{proposition}[Bertsekas and Tsitsiklis (1989), Proposition 5.1]
Under Assumptions~\ref{assumption:bounded_time}, \ref{assumption:lipschitz}, and \ref{assumption:direction}, there exists some~$\alpha_0>0$ (depending on~$n$, $B$, $K_1$, and~$K_3$) such that if~$0 < \alpha < \alpha_0$ then~$\lim_{t\to\infty}\nabla E_\theta(\vh(t))=0$.
\end{proposition}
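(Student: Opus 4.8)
The plan is to invoke the classical convergence theorem of \citet[Proposition~5.1]{bertsekas1989parallel} directly, since every hypothesis it requires has already been secured above. \Cref{assumption:bounded_time} is exactly the partial asynchronism model we work under; \Cref{assumption:lipschitz} follows from strong convexity of $E_\theta$ (which lets us shift each $e^i_\theta$ by its optimal value to obtain part (a)) together with the bounded-Hessian assumption (which gives part (b), Lipschitz continuity of $\nabla E_\theta$ with some constant $K_1$); and \Cref{assumption:direction} was established in the three lemmas above, where we exhibited explicit upper bounds on $\alpha$ under which the stale, fully decentralized search direction $s_i(t)$ satisfies both the block-descent inequality and the lower bound $\|s_i(t)\| \geq K_2\|\nabla_{\vh_i}E_\theta(\vh(t))\|$. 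It then only remains to take $\alpha_0$ to be the minimum of the thresholds produced in those lemmas and the step-size threshold internal to \citet[Proposition~5.1]{bertsekas1989parallel}, and to quote the conclusion $\lim_{t\to\infty}\nabla E_\theta(\vh(t)) = 0$.

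For a reader who wants to see why the cited result holds, I would sketch its mechanism rather than reproduce it. The engine is a descent lemma: since only block $i$ moves at an update time $t\in T^i$, Lipschitz continuity of $\nabla E_\theta$ gives
\[
E_\theta(\vh(t+1)) \;\leq\; E_\theta(\vh(t)) + \alpha\, s_i(t)^\top \nabla_{\vh_i}E_\theta(\vh(t)) + \tfrac{K_1 \alpha^2}{2}\,\|s_i(t)\|^2 ,
\]
and \Cref{assumption:direction}(a) turns the cross term into $-\alpha\|s_i(t)\|^2/K_3$, so for $\alpha$ small the right-hand side strictly decreases by a positive multiple of $\|s_i(t)\|^2$. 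Summing these decrements over all nodes and all times, using \Cref{assumption:bounded_time}(a) so that no node stalls for longer than $B$ steps and \Cref{assumption:lipschitz}(a) to bound the total decrease from below, yields $\sum_{t\in T^i} \|s_i(t)\|^2 < \infty$ for every $i$; with \Cref{assumption:direction}(b) this forces the gradients along update times to vanish, and continuity together with the bounded-staleness slowdown of the state then propagates this to $\nabla E_\theta(\vh(t))\to 0$.

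The only genuine work lies in pinning down $\alpha_0$, and the preceding lemmas supply exactly that. Two effects must be controlled simultaneously: the overshoot $\tfrac{K_1\alpha^2}{2}\|s_i(t)\|^2$ must be dominated by the linear descent it accompanies (requiring roughly $\alpha < 2/(K_1K_3)$), and the error introduced by staleness must not overwhelm the descent. In our decentralized setting the latter is subtler than in the textbook version, because the gradient $\vg_{ji}$ that a neighbor $j$ reports to node $i$ is itself computed from $j$'s own stale view of its neighbors, so a shared neighbor $j'$ can be doubly outdated; this bookkeeping produces the factor $B_0 = 2\,n_{\max}\,|\mathcal{N}(i)\cap\mathcal{N}(j)|\,B$ appearing in the lemmas. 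Since this staleness error is nevertheless $O(\alpha)$ relative to the true gradient while the guaranteed descent is $O(\alpha)$ in the gradient magnitude and only $O(\alpha^2)$ in the overshoot, choosing $\alpha_0$ as the minimum over nodes of the explicit bounds derived above and in \citet{bertsekas1989parallel} makes both requirements hold at once, and no idea beyond the estimates already established is needed to close the argument.
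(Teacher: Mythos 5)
Your proposal is correct and matches the paper's treatment exactly: the paper does not prove this proposition but quotes it from \citet{bertsekas1989parallel}, with all of the actual work residing in the preceding lemmas that verify Assumptions~\ref{assumption:lipschitz} and~\ref{assumption:direction} and pin down the admissible step size $\alpha_0$. Your additional sketch of the descent-lemma mechanism inside the cited result is accurate but goes beyond what the paper supplies; it does not change the approach.
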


\section{Asynchronous GNN implementation}
\label{asynchronous_gnn_implementation}
In our asynchronous inference experiments, we simulate partially asynchronous execution (see \Cref{alg:async_inference}). We fix the maximum staleness bound to $B=5$.
Node updates are staggered across time and messages sent between nodes can incur delays. 
In particular, when a node updates, its next update time is selected randomly between time $t+1$ and $t+S$, where $S := 5$ is the ``stagger'' time, and one of the last $D := 2$ values of its neighbors is chosen for performing the node update.  
This satisfies assumptions 1 and 2 of partial asynchronism; nodes update at least every $S$ time units, and messages a node's view of its neighbors is stale due to message delay by at most $D$ time steps.  

\begin{algorithm}
\caption{Simulated asynchronous GNN inference}\label{alg:async_inference}
\begin{algorithmic}

\State Initialize each node in $\gG$ with all GNN parameters, its own node features $\mX_i$, and for each of its neighbors $j$, the node and edge features $\mX_j$, $\mE_{ij}$, and weights $\mA_{i,j}$.
Let $L$ be the number of layers in the GNN, equal to $\infty$ for implicitly-defined GNNs.
Let $T$ be the total number of simulated node updates.
Let $n$ be the number of nodes in the graph.
Let $S$ be the maximum number of time units between updates for any node, and $D$ be the maximum delay incurred by messages from neighbors.\\

\Procedure{UpdateNodeOpt}{$\mH$, $\mX$, $\mE$, $\mA$, $\theta$, $t$, $i$}
    \LComment{Sum of nodes current view of neighbors' gradients}
    \State $\vg_i \gets \sum_{j \in \mathcal{N}(i) \cup i} \vg_{ji}(\tau_j^i(t))$
    \LComment{Node latent update is a single gradient step}
    \State $\vh_i \gets \vh_i - \alpha \vg_i$
    \LComment{Node gradient is updated with new latent value}
    \State $\vg_{ij} \gets \frac{\partial e^i_\theta}{\partial \vh_j}(\vh_j(\tau_j^i(t))), \quad\quad \forall j \in \mathcal{N} \cup i$ 
\EndProcedure

\vspace{4mm}
\Procedure{UpdateNodeFinite}{$\mH$, $\mX$, $\mE$, $\mA$, $\theta$, $t$, $i$}
    \LComment{Update node message and latent}
    \State $\vm_i \gets \bigoplus_{j \in \mathcal{N}(i)}m^{t_i}\left(\vh_i, \vh_j(\tau_j^i(t)), \vx_i, \vx_j, \ve_{ij}; \theta_m^{t_i}\right)$
    \State $\vh_i \gets u^{t_i}\left(\vm_i, \vh_i, \vx_i; \theta_u^{t_i}\right)$

\EndProcedure
\vspace{4mm}
\Procedure{SimulateAsync}{$\mX, \mE, \mA, L, T, n, S, D$}

\State $\text{update\_time} \gets \text{[ ]}$
\vspace{4mm}

\For{$i = 1, \dots, n$}
    \LComment{Initialize current iteration count.}
    \State $t_i \gets 0$
    \LComment{Randomly select first update time in $(1,S)$}
        \State $\text{ update\_time}_i \sim \text{Uniform}(1, S)$
     \State $\text{ update\_time}[i] \gets \text{update\_time}_i$
    \For{$j \in \mathcal{N}(i)$}
        \LComment{Initialize staleness view of each neighbor}
        \State $\tau_j^i \gets 0$
    \EndFor
\EndFor 

\vspace{4mm}

\For{$t = 0,\dots,T$}
        \State $\text{update\_nodes} \gets \{i = 1,\dots,n \mid  \text{update\_time}[i] == t\}$
    \For{$i \in \text{update\_nodes}$}
    
    \If{$t_i < L$}
        \For{neighbors $j$ of node $i$}
            \LComment{Sample an updated stale view for node $j$}
            \State $(\tau_j^i)' \sim \text{Uniform}(0, \text{min}(t - \tau_j^i, D))$
            \State $\tau_j^i \gets t - (\tau_j^i)'$
            \State 
        \EndFor
        
        \LComment{Update the current node (finite GNN shown)}
        \State UpdateNodeFinite($\mH$, $\mX$, $\mE$, $\mA$, $\theta$, $t_i$, $i$)\\

        \vspace{4mm}

        \LComment{Randomly select next update time in $(0,S)$}
        \State $\text{ update\_time}_i \sim \text{Uniform}(1, S)$
     \State $\text{ update\_time}[i] \gets t + \text{update\_time}_i$
        \LComment{Increment current iteration}
        
        \State $t_i \gets t_i + 1$
    \Else
        \LComment{Use readout to compute output}
        \State $\hat{y}_i = o_\phi(h_i^L)$
    \EndIf
  \EndFor
\EndFor
\EndProcedure
\end{algorithmic}
\end{algorithm}

\section{Experiment Details (Synthetic Experiments)}
\label{experimental_details}

\subsection{Architecture details}
For all implicitly-defined GNN architectures, we use the same node embedding size with $\vh_i \in \mathbb{R}^2$. 
The architectures are chosen such that the number of parameters is approximately equal between models (with the constraint of using the same node embedding size).
All architectures employ an output function $o_\phi$ which is parameterized as an MLP with layers $(4,4,1)$ ($(4,4,2)$ for the coordinates experiment, as node predictions are positions in $\mathbb{R}^2$).

\paragraph{Energy GNN} For energy GNN, we use a PICNN with layer sizes $(4,4,2)$ for the message function $m$ in \cref{energy_message} and a PICNN with layer sizes $(4,4,1)$ for the update function $u$ in \cref{introduce_scalar_energy}.
The aggregation in \cref{energy_message} uses the entries of the unnormalized adjacency matrix $\mA$ with no self-loops added. 
We add an independently parameterized self-loop to the message passing function $m$ (instead of using the same parameters as for neighbors).
We set $\beta=0.04$.

\paragraph{Energy GNN + Attention} For energy GNN with the attention mechanism, the attention weights $\alpha_{ij}$ are computed as in \cref{gat_mp}; however, to maintain convexity with respect to the embeddings, the node and edge features (if present) are used in rather than using the embeddings $\vh$. 
We use 2 attention heads, where the attention head outputs are concatenated to form the message $\vm_i$ in \cref{energy_message}.

\paragraph{GSDGNN} For GSDGNN (where \cref{gsd_eqn} describes the optimization objective for obtaining embeddings), we parameterize $g_\theta$ as an MLP with layer sizes $(16,16,16,2)$. 
We use the symmetric renormalized Laplacian matrix $\tilde{\mL} = \mI - \tilde{\mA} = \mI - (\mD+\mI)^{-\frac{1}{2}}(\mA+\mI)(\mD+\mI)^{-\frac{1}{2}}$ for the Laplacian regularization term, and set $\gamma=1.0, \beta=5.0$. 
(Note that gradient-based optimization of this objective function has a direct correspondence to the embedding update function of APPNP \cite{appnp}.)

\paragraph{IGNN} For IGNN (described in \cref{ignn}), we parameterize $g_\theta: \mathbb{R}^{n \times p} \rightarrow \mathbb{R}^{n \times k}$ as an MLP with layer sizes $(16,16,16,2)$.

\paragraph{GCN and GAT}
For GCN, we use $5$ layers of message passing with layer sizes $(10,10,10,10,10)$. 
For GAT, we use $5$ layers of message passing with layer sizes $(3,3,3,3,3)$, and concatenate the output of $3$ attention heads at each layer.

\subsection{Training details}

For binary classification experiments, we use binary cross entropy loss for training. 
For regression experiments, we use mean squared error. 
We use the Adam optimizer with weight decay, where we set the optimizer parameters as $\alpha=0.001, \beta_1=0.9, \beta_2=0.999$.
We set the learning rate to $0.002$, and use exponential decay with rate $0.98$ ever $200$ epochs. 
In the forward pass for IGNN, we iterate on the node update equation until convergence of node embeddings, with a convergence tolerance of $10^{-5}$.
The maximum number of iterations is set to 500. 
In the forward pass for the optimization-based GNNs, we use L-BFGS to minimize $E_\theta$ w.r.t node embeddings, with a convergence tolerance of $10^{-5}$.
The maximum number of iterations is set to 50. 
We train for a maximum of 5000 epochs.
These experiments were performed on a single NVIDIA RTX 2080 Ti.

\section{Experiment Details (Benchmark Datasets)}
\label{experimental_details_benchmark}

\subsection{Dataset Details}
The benchmark datasets we report performance for are MUTAG, PROTEINS, Peptides-func, and Peptides-struct, where the prediction task is graph classification/regression, and PPI, where the prediction task is node classification. 

\paragraph{MUTAG} MUTAG is a dataset consisting of 188 graphs, each of which corresponds to a nitroaromatic compound \citep{mutag}. The goal is to predict the mutagenicity of each compound on \textit{Salmonella typhimurium}. Nodes in the graphs correspond to atoms (and are associated with a one-hot encoded feature in $\mathbb{R}^7$ corresponding to the atom type), and edges correspond to bonds. The average number of nodes in a graph is 17.93, and the average number of edges is 19.79.
\paragraph{PROTEINS} The PROTEINS dataset \cite{PROTEINS} consists of 1113 graphs, each of which corresponds to a protein. The task is predicting whether or not the protein is an enzyme. Nodes in the graph correspond to amino acids in the protein (and are associated with node features in $\mathbb{R}^3$ representing amino acid properties). Edges connect amino acids that are less than some threshold distance from one another in the protein. The average number of nodes is 39.06, and the average number of edges is 72.82.
\paragraph{Peptides-func \& Peptides-struct} The peptides-func and peptides-struct datasets \cite{lrgb} consist of the same 15535 graphs, with different prediction targets. Each graph corresponds to a peptide; nodes correspond to heavy atoms (and are associated with 9 categorical node features representing atom properties) and edges correspond to bonds (and are associated with 3 categorical edge features representing bond properties).  For peptides-func, the prediction task is multi-label graph classification (10 classes) of the peptide function. For peptides-struct, the prediction task is graph regression of various peptide properties (11 regression targets). The average number of nodes in a graph is 150.94, and average number of edges is 307.30. We use a train/valid/test split consistent with \cite{lrgb}.
\paragraph{PPI} The PPI dataset \citep{grlbook} consists of 24 graphs, each of which corresponds to a protein-protein interaction network found in different areas of the body. Each node in the graph corresponds to a protein, with edges connecting proteins that interact with one another. Nodes are associated with features in $\mathbb{R}^{50}$, representing some properties of the protein. Each protein has $121$ binary prediction targets, each of which corresponds to some ontological property that the protein may or may not have. We use a 20/2/2 train/valid/test split consistent with \cite{graphsage}. 

\subsection{Results}
For all experiments with benchmark datasets, we use the same training procedure and architectures as described in \Cref{experimental_details}.
For node classification tasks, the final layer of output function $o_\phi$ is modified to use layer sizes $(4,4,\text{(num\_classes)})$ where num\_classes is the number of distinct class labels. 
For graph classification/regression tasks, we obtain graph-level predictions by passing the mean of the node-level predictions through a graph readout function parameterized by an MLP with layers $(4,4,\text{(num\_classes)})$.

For PROTEINS and MUTAG, we perform 10-fold cross validation and report average classification accuracy and standard deviations in Table \ref{pyg}.
For PPI, we use a 20/2/2 train/valid/test split consistent with \cite{graphsage}, and report average micro-f1 scores in Table \ref{ppi}.
For Peptides-func and Peptides-struct, we use a train/valid/test split consistent with \cite{lrgb} and report average precision and mean average error, respectively, in Table \ref{lrgb_table}.
Non-asterisked values correspond to our experimental setup, where the number of parameters (and embedding dimension) is equal across architectures.
When parameter numbers are equal, the energy GNN architecture achieves the best performance on MUTAG, PPI, and Peptides-func, and achieves competitive performance on PROTEINS and Peptides-struct.
We note that these reported results are under synchronous evaluation (similar to the synthetic experiments, Table \ref{benchmark_decreases} demonstrates that under asynchronous execution, we observe a decrease in performance for GCN and GAT, the representative explicitly-defined GNNs).


In Tables \ref{pyg} and \ref{ppi}, we also include performance reported by other works (marked by an asterisk), which correspond to architectures using a higher number of parameters and larger embedding dimensions. 
Where layer specifications are not included (for asterisked values), we were unable to determine them from the cited paper. 
Experiments with larger energy GNN architectures, including multi-layer energy GNN architectures, is left to future work.

\begin{table}
\centering
\begin{tabular}{lll}
    \\
    &\multicolumn{2}{c}{\bf DATASET} \\
     \cline{2-3}\\
    \multicolumn{1}{p{3.0cm}}{\bf MODEL}  &\multicolumn{1}{p{2.0cm}}{\bf MUTAG }  &\multicolumn{1}{p{2.5cm}}{\bf PROTEINS }
    \\ \hline \\
    Energy GNN (edge-wise) &   \textbf{87.6 $\pm$ 3.4} & 72.5 $\pm$ 0.3 \\
    Energy GNN + attention & 79.5 $\pm$ 1.8 & 72.5 $\pm$ 0.5 \\
    IGNN (1 layer) & 73.0 $\pm$ 0.6 & 71.8 $\pm$ 1.3\\
    GSD GNN & 78.4 $\pm$ 2.2 & 72.8 $\pm$ 0.8 \\
    GCN (5 layer) & 78.0 $\pm$ 1.4 & \textbf{73.7 $\pm$ 0.5}\\
    GAT (5 layer) & 76.1 $\pm$ 1.5 & 71.7 $\pm$ 3.2\\
    \hline \\
    GCN* \citep{gin} (5 layer, embedding dimension=64)  & 85.6 $\pm$ 5.8 & 76.0 $\pm$ 3.2  \\
    IGNN* \citep{ignn} (3 layer, embedding dimension=32) & 89.3 $\pm$ 6.7 & 77.7 $\pm$ 3.4  \\
    GIN* \citep{gin} (5 layer, embedding dimension=64) & 89.4 $\pm$ 5.6 & 76.2 $\pm$ 1.9 \\
\end{tabular}
\caption{Graph classification accuracy (\%). Results are averaged (and standard deviations are computed) using 10 fold cross validation with 5 random parameter seeds. Asterisked values are obtained from the work cited. Non-asterisked values correspond to architectures with the same number of parameters and 2-dimensional embeddings.}
\label{pyg}
\end{table}

\begin{table}
\centering
\begin{tabular}{ll}
    \\
    \multicolumn{1}{c}{\bf MODEL} &\multicolumn{1}{c}{\bf micro f1} \\
    \hline \\
    Energy GNN (edge-wise) &   \textbf{76.2}   \\
    Energy GNN + attention & 76.0 \\
    IGNN (1 layer) & 75.5 \\
    GSD GNN & 76.0 \\
    GAT (5 layer) & 74.3 \\
    GCN (5 layer) & \textbf{76.2} \\
    \hline \\
    MLP*  \citep{ignn} &   46.2    \\
    GCN*  \citep{ignn} &   59.2    \\
    GraphSAGE* \citep{gat} (3 layer, embedding dimensions=[512, 512, 726])  &  76.8    \\
    GAT* \citep{gat} (3 layer, embedding dimension=1024)  &  97.3    \\
    IGNN* \citep{ignn} (5 layer, embedding dimensions=[1024, 512, 512, 256, 121]) &  97.6  \\
\end{tabular}
\caption{Mean micro-F1 score for node classification on PPI dataset (\%). Asterisked values are obtained from the work cited. Non-asterisked values correspond to architectures with the same number of parameters and 2-dimensional embeddings.}
\label{ppi}
\end{table}

\begin{table}
\centering
\begin{tabular}{lll}
    \\
    &\multicolumn{2}{c}{\bf DATASET} \\
     \cline{2-3}\\
    \multicolumn{1}{p{3.0cm}}{\bf MODEL}   &\multicolumn{1}{p{3.5cm}}{\textbf{Peptides-func} (AP)}&\multicolumn{1}{p{3.5cm}}{\textbf{Peptides-struct} (MAE)}
    \\ \hline \\
    Energy GNN (edge-wise)  & 0.348 & 0.367 \\
    Energy GNN + attention  & \textbf{0.381} & 0.402 \\
    IGNN (1 layer)  & 0.211 & 0.426 \\
    GSD GNN  & 0.343  & 0.375 \\
    GCN (5 layer) & 0.362 & 0.366 \\
    GAT (5 layer)  &  0.355 & \textbf{0.335} \\
\end{tabular}
\caption{Performance on Peptides-func (average precision) and Peptides-struct (mean average error) datasets from the LRGB benchmarks. Values correspond to architectures with the same number of parameters and 2-dimensional embeddings.}
\label{lrgb_table}
\end{table}

\begin{table}
\centering
\begin{tabular}{llllll}
    \\
    &\multicolumn{4}{c}{\bf DATASET} \\
     \cline{2-6}\\
    \multicolumn{1}{p{2.0cm}}{\bf MODEL}   &\multicolumn{1}{p{2.0cm}}{\textbf{MUTAG} (\%)}&\multicolumn{1}{p{2.4cm}}{\textbf{PROTEINS} (\%)}&\multicolumn{1}{p{2.2cm}}{\textbf{PPI} (micro f1)}&\multicolumn{1}{p{2.0cm}}{\textbf{Peptides-func} (AP)}&\multicolumn{1}{p{2.0cm}}{\textbf{Peptides-struct} (MAE)}
    \\ \hline \\
    GCN (5 layer) & 16.0 $\pm$ 10.2 & 24.0 $\pm$ 4.9 & 25.1 $\pm$ 9.5 & 0.194 $\pm$ 0.085 & 0.734 $\pm$ 0.041\\
    GAT (5 layer)  & 28.0 $\pm$ 11.7 & 40.0 $\pm$ 5.8 & 29.4 $\pm$ 12.3 & 0.494 $\pm$ 0.031 &	0.335 $\pm$ 0.115\\
\end{tabular}
\caption{\small \textit{Decrease} in task performance observed from switching from synchronous to asynchronous inference on sub-sample of test data (10 samples) using one trained model instance.
Mean and standard deviation are across 5 asynchronous runs.
The poor performance of GCN and GAT are consistent with the expected unreliability of explicitly-defined GNNs with asynchronous inference. Decreases in task performance for all implicitly-defined GNNs (IGNN, GSDGNN, and energy GNN variants) is less than $0.1\%$ of synchronous performance (i.e. result from numerical error); these values are omitted from the table for concision.}
\label{benchmark_decreases}
\end{table}
	
\end{document}